\newlength\aftertitskip     \newlength\beforetitskip
\newlength\interauthorskip  \newlength\aftermaketitskip
\newcommand{\BlackBox}{\rule{1.5ex}{1.5ex}}  
\newenvironment{proof}{\par\noindent{\bf Proof\ }}{\hfill\BlackBox\\[2mm]}
\newtheorem{theorem}{Theorem}
\newtheorem{lemma}[theorem]{Lemma}
\newtheorem{corollary}[theorem]{Corollary}
\newtheorem{definition}[theorem]{Definition}
\DeclareMathOperator*{\argmin}{arg\,min}
\newcommand{\RR}{\protect\mathbb{R}}
\newcommand{\EE}{\protect\mathbb{E}}
\newcommand{\PP}{\protect\mathbb{P}}
\newcommand{\TT}{\protect\mathbb{T}}
\newcommand{\MM}{\protect\mathbb{M}}
\newcommand{\obs}{\protect\mathcal{Z}}
\newcommand{\ind}{\protect\bm{1}}
\newcommand{\dist}{\raise.17ex\hbox{$\scriptstyle\mathtt{\sim}$}}
\newcommand{\Lbar}{\protect{\underline{L}}}
\newcommand{\risk}{\protect{\mathrm{Risk}}}
\newcommand{\Rbar}[1]{\underline{\risk}_{#1}}
\newcommand{\Lagrange}{\protect{\mathcal{L}}}
\newcommand{\nL}{\protect{\nabla \Lbar}}
\newcommand{\norm}[1]{\left\lVert #1 \right\rVert}
\newcommand{\ip}[2]{\left< #1, #2\right>}
\newcommand{\dual}[1]{(\RR^{#1})^*}
\newcommand{\riskL}[1]{\risk_{#1}}
\newcommand{\pred}[1]{[\![ #1 ]\!]}
\newcommand{\id}[1]{\mathrm{id}_{#1}}
\pgfplotsset{compat=1.18}
\title{On Experiments}
\author[1]{Brendan van Rooyen}
\affil[1]{Decisions 360}
\date{}
\begin{document}
	
\maketitle

\begin{abstract}
	The scientific process is a means to turn the results of experiments into knowledge about the world in which we live. Much research effort has been directed toward \emph{automating} this process. To do this, one needs to \emph{formulate} the scientific process in a \emph{precise} mathematical language. This paper outlines one such language. What is presented here is hardly new. The material is based on great thinkers from times past \citep{VonNeumann1945,Wald1949,Blackwell1951,DeGroot1962,Le1964} as well as more modern contributions \citep{Cam2011,Torgersen1991,Reid2009b,Grunwald2004,Dawid2007}. The novel contributions of this paper are:
	
	\begin{enumerate}
		\item A new general data processing inequality.
		\item A bias variance decomposition for canonical losses.
		\item Streamlined proofs of the Blackwell-Sherman-Stein and Randomization theorems.
		\item Means of calculating deficiency through linear programming.
	\end{enumerate}	
\end{abstract}

\section{Introduction}

The notion of an experiment is a key component of the scientific process. Experiments link the values of some unknown property of the world to the outcomes of the experiment. Performing an experiment provides \emph{information}. We construct a general language for defining experiments, the information contained therein, and means to qualitatively and quantitatively compare experiments. Information will be seen to be governed by transformation, the information present explained by those transformations that render it untouched. Mathematically, the study of transformations is a subset of category theory. Therefore, we make allusions to this theory where appropriate, showing where some constructions in statistics can be seen as general constructions in category theory.
\\
\\
These ideas are hardly new, and we lean heavily on great thinkers of the past. We argue that \emph{presentation} of the ideas is novel, modern, and concise. Of course, we also offer some new results. The novel contributions of this paper are:

\begin{enumerate}
	\item A new general data-processing inequality.
	\item A bias variance decomposition for canonical losses.
	\item Streamlined proofs of the Blackwell-Sherman-Stein and Randomization Theorems.
	\item Means to calculate deficiency via linear programming.
\end{enumerate}
While our ultimate goal is the comparison of experiments, in section \ref{Optimal Decisions} we also provide results on the representation of loss functions. The results in this section have appeared previously, as Appendix C of \citep{JMLR:v18:16-315}. We include them here for completeness.

\section{The General Decision Problem}

We phrase the general decision problem as a two-player game between nature and a decision maker. Let $\Theta$ be a set of possible values of some unknown, and $A$ the set of actions available to the decision maker. The consequence of an action is measured by a loss function $L : \Theta \times A \rightarrow \RR$. A negative loss represents a gain to the decision maker. The norm of a loss function is given by its largest possible consequence (positive or negative), $\norm{L}_\infty = \max_{\theta, a} \lvert L(\theta,a) \rvert$.
\\
\\
Unknown to the decision maker is the \emph{exact} value of $\theta$. To \emph{discover} this, the decision maker is guided by experiments. An experiment is a special kind of relationship between the set of unknowns and the outcome of the experiment. Due to limitations in the type/number of tests available to the decision maker, as well as potential errors in the measurement apparatus, the exact identity of $\theta$ can remain a mystery despite observed data. We assume that the decision maker represents their uncertainty in $\theta$ through probability.

\section{Probability}

The loss function itself does not tell us how to act; it provides a \emph{loss profile}, a function $L_a : \Theta \rightarrow \RR$, for each action $a \in A$. This loss profile summarizes the loss incurred for each unknown value for a fixed action. Our knowledge of $\theta$ allows us to place a value on each of these actions, through an \emph{expectation}, $$ \EE: \RR^\Theta \rightarrow \RR,
$$
which generally can be \emph{any} function from loss profiles to reals. Ultimately, the decision maker chooses the action that they think has the lowest expectation. Exactly how one maps loss profiles to their expectations is up to the individual. Here we place the restriction that the expectation should be given by a \emph{probability distribution}.

\begin{definition}\label{Probability Distribution}
	
	A \emph{probability distribution} on a set $X$ is an element of $\dual{X}$, i.e. a linear function $\EE_P : \RR^{X} \rightarrow \RR$, such that:
	
	\begin{enumerate}
		
		\item $\EE_P \ind_X = 1$.
		\item If $f(x) \leq g(x) ,\ \forall x \in X$ then $\EE_P f \leq \EE_P g$.
		
	\end{enumerate}
	
\end{definition}
Point one states that probability distributions are \emph{normalized}, point two that they are \emph{positive}. At times we use infix notation, with $\EE_{P} f = \ip{P}{f}$. Define the set of all distributions on a set $X$ to be $\PP(X)$. For any $x \in X$ define the point mass distribution $\delta_x$, with $\EE_{\delta_x} f = f(x)$ for all functions $f$.

\subsection{Measure Theoretic Technicalities}

The language as developed is fine for dealing with finite sets, where elements of the dual are merely finite dimension vectors. When dealing with infinite sets, the issue of how to \emph{represent} elements of $\left(\RR^X\right)^*$ becomes more troublesome. The usual route is to only consider loss profiles that are \emph{measurable}. Rather than all functions $\RR^X$ we work with a subalgebra of measurable functions, which we call $\mathrm{Meas}(X, \RR)$. A probability distribution is then a positive, normalized, linear function $\mathrm{Meas}(X,\RR) \rightarrow \RR$. Restricting oneself in this way affords the \emph{representation} of probability distributions via the Lebesgue integral. Of course, one could take our work as pertaining to $\mathrm{Meas}(X,\RR)$ rather than all of $\RR^X$, without any real loss in the results. Our primary interest is when:
$$
\text{The Basic Sets of Study are Finite.}
$$
We argue that this is a defensible position to take if one is interested in computer science. After all, a digital computer functions via the manipulation of a finite set of states, meaning the objects we are interested in can either be represented via a finite set or can be sufficiently approximated by one.
\\
\\
At times, the natural set of actions will \emph{not} be a finite set, but rather a well-behaved subset of $\RR^n$. 

\section{Optimal Decisions}\label{Optimal Decisions}

Returning to our decision maker, we assume that they represent their uncertainty in $\Theta$, via a probability distribution
$P \in \PP(\Theta)$, and that they choose their actions minimizing the expected loss.

\subsection{Representing Loss Functions}

Here we develop \emph{general} representations of loss functions $L : \Theta \times A \rightarrow \RR$. We assume that the sets $\Theta$ and $A$ are finite. Even if one starts with a finite set of actions, in many statistical problems it is natural to consider distributions over unknowns $\PP(\Theta)$.

\begin{definition}
	
	A proper loss is a function $L : \Theta \times \PP(\Theta) \rightarrow \RR$ such that $\forall P \in \PP(\Theta)$,
	$$
	P \in \argmin_{Q \in \PP(\Theta)} \EE_P L_Q.
	$$
	
\end{definition}
A proper loss takes a prediction $Q \in \PP(\Theta)$, and then penalizes the decision maker according to the amount of weight that their prediction assigns to the unknown $\theta$. Properness ensures that if the decision maker \emph{knows} $P$, then they minimize their expected loss by \emph{reporting} $P$. Proper losses constitute a well-studied class of loss functions that provide suitable surrogates for decision problems \citep{brier1950verification,Grunwald2004,Zhang2004,Gneiting2007,Dawid2007,Reid2009a,Dawid2007,AvilaPires2013}.
\\
\\
Furthermore, we show how to render any proper loss convex through a canonical reparameterization. This allows the use of tools from convex analysis \citep{Boyd2004,Lucchetti2006} to aid in calculating optimal actions. 

\subsection{Entropy from Loss}

Rather than working with probability distributions, we take the route of \citep{Williamson2014} and work with unnormalized distributions. Denote the set of all unnormalized distributions on $\Theta$ by $\PP^+(\Theta)$. For any loss function $L$, define the \emph{entropy} $\Lbar : \PP^+(\Theta) \rightarrow \RR$, 
$$
\Lbar(\mu) = \min_{a \in A} \langle \mu, L_a \rangle.
$$ 
$\Lbar(P)$ measures the uncertainty of the optimal action for the distribution $P$. The entropy is also called an \emph{uncertainty function}, a \emph{Bayes risk} or a \emph{support function} \citep{DeGroot1962,Williamson2014}. It is concave and 1-homogeneous.

\begin{definition}
	A function $f : \PP^+(\Theta)\rightarrow \RR$ is 1-homogeneous if for all $\mu \in \PP^+(\Theta)$ and for all $\lambda > 0$, 
	$$
	f(\lambda \mu) = \lambda f(\mu).
	$$
	
\end{definition}

\subsection{Loss from Entropy}\label{sec:loss-from-entropy}

All loss functions give rise to an entropy. The entropy encodes much information about its associated loss through its \emph{super-gradients}, which includes all the \emph{Bayes} actions for the underlying loss. For any distribution $P$, define the \emph{Bayes actions} for $P$ as the set of minimizers,
$$
A_P = \argmin_{a \in A} \ip{P}{L_a}.
$$
For any $a_P \in A_P$ we have $\Lbar(P) = \ip{P}{L_{a_P}}$. 

\begin{definition}[Super-gradient of a concave function]
	Let $f : \PP^+(\Theta)\rightarrow \RR$ be a concave function. $v \in \RR^\Theta$ is a \emph{super-gradient} of $f$ at the point $x$ if for all $y\in \PP^+(\Theta)$,
	$$
	\ip{y-x}{v} + f(x) \geq f(y).
	$$
\end{definition}
Denote the set of all supergradients at a point $x$ by $\partial f(x)$, and the set of all supergradients by $\partial f = \cup_{x} \partial f(x)$. For differentiable concave functions, the supergradients are the same as regular gradients \citep{Lucchetti2006}. 1-homogeneous functions afford a very simple representation via their super-gradients.

\begin{theorem}[Generalized Euler's Homogeneous Function Theorem]\label{Eulers Homo Theorem}
	Let $f : \PP^+(\Theta) \rightarrow \RR$ be a concave 1-homogeneous function. Then for all $x$ and for all $v \in \partial f(x)$,
	$$
	f(x) = \langle x, v \rangle.
	$$
	Furthermore, $v \in \partial f (x) \implies v \in \partial f (\lambda x)$ for all $\lambda > 0$.
\end{theorem}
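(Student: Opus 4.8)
\emph{Proof proposal.} The plan is to exploit the defining inequality of a super-gradient by probing it along the ray through the base point. Fix $x \in \PP^+(\Theta)$ and $v \in \partial f(x)$, so that by definition
\[
\ip{y-x}{v} + f(x) \geq f(y) \qquad \text{for all } y \in \PP^+(\Theta).
\]
Since $\PP^+(\Theta)$ is closed under multiplication by positive scalars, $\lambda x \in \PP^+(\Theta)$ for every $\lambda > 0$, so I may substitute $y = \lambda x$. Using $1$-homogeneity, $f(\lambda x) = \lambda f(x)$, and the inequality rearranges to $(\lambda - 1)\ip{x}{v} \geq (\lambda - 1) f(x)$.

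Now split on the sign of $\lambda - 1$. Choosing $\lambda > 1$ yields $\ip{x}{v} \geq f(x)$; choosing $0 < \lambda < 1$ reverses the inequality upon dividing by $\lambda - 1 < 0$ and yields $\ip{x}{v} \leq f(x)$. Combining the two gives $f(x) = \ip{x}{v}$, the first assertion.

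For the second assertion, fix $\lambda > 0$ and verify the super-gradient inequality for $f$ at $\lambda x$, i.e. that $\ip{y - \lambda x}{v} + f(\lambda x) \geq f(y)$ for all $y \in \PP^+(\Theta)$. By the first part together with $1$-homogeneity, $f(\lambda x) = \lambda f(x) = \lambda \ip{x}{v} = \ip{\lambda x}{v}$, so the left-hand side simplifies to $\ip{y}{v}$; it therefore suffices to establish the global linear upper bound $\ip{y}{v} \geq f(y)$ for all $y$. But rewriting the original super-gradient inequality as $\ip{y}{v} - \ip{x}{v} + f(x) \geq f(y)$ and substituting $\ip{x}{v} = f(x)$ gives precisely this bound, completing the argument.

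I do not anticipate a genuine obstacle here; the only point that needs care is structural rather than computational, namely that the domain $\PP^+(\Theta)$ is a convex cone so that $\lambda x$ is a legitimate test point and the super-gradient inequality is quantified over the entire ray through $x$. Once that is in hand, the sign change of $\lambda - 1$ forces the two-sided bound, and in fact concavity of $f$ is not used beyond guaranteeing that $\partial f$ is the right notion to work with.
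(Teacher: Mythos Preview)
Your proof is correct and follows essentially the same approach as the paper: for the first claim you substitute $y=\lambda x$ into the super-gradient inequality and split on the sign of $\lambda-1$, where the paper simply picks the specific values $\lambda=\tfrac{1}{2}$ and (effectively) $\lambda=2$. For the second claim the paper instead multiplies the super-gradient inequality at $x$ by $\lambda$ and invokes $1$-homogeneity on both $f(x)$ and $f(y)$, while you route through the already-established identity $f(\lambda x)=\ip{\lambda x}{v}$ to reduce to the global bound $\ip{y}{v}\geq f(y)$; the two arguments are equivalent and equally direct.
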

We include a simple proof of this theorem for completeness.

\begin{proof}
	Firstly, for all $x$,
	$$
	\ip{\frac{1}{2} x - x}{v} + f(x) \geq \frac{1}{2} f(x),
	$$
	which follows directly from the definition of a supergradient at $x$ and the 1-homogeneity of $f$. Rearranging yields $\frac{1}{2} (f(x) - \ip{x}{v}) \geq 0$, which implies $f(x) \geq \ip{x}{v}$. Similarly,
	$$
	\ip{x - \frac{1}{2} x}{v} + \frac{1}{2}f(x) \geq f(x),
	$$
	which follows directly from the definition of a supergradient at $\lambda x$ and the 1-homogeneity of $f$. Rearranging yields $\frac{1}{2} (f(x) - \ip{x}{v}) \leq 0$, yielding $f(x) \leq \ip{x}{v}$. Combing the results gives $f(x) = \ip{x}{v}$.
	To prove the second claim, we have for all $y$ and $\lambda >0$,
	\begin{align*}
	&\ip{y-x}{v} + f(x) \geq f(y) \\
	&\ip{\lambda y - \lambda x}{v} + f(\lambda x) \geq f(\lambda y),
	\end{align*}
	where the first line is by definition and the second is by 1-homogeneity. As $y$ is arbitrary, the claim is proved.

\end{proof}
This theorem provides a corollary that shows the supergradients of a 1-homogeneous function have a property similar to properness.

\begin{corollary}\label{proper super gradients}
	Let $f : \PP^+(\Theta) \rightarrow \RR$ be a concave 1-homogeneous function. Then for all $x,y \in \PP^+(\Theta)$ and for all $v_x \in \partial f(x)$, $v_y \in \partial f(y)$,
	$$
	\ip{x}{v_y} \geq \ip{x}{v_x}. 
	$$
	
\end{corollary}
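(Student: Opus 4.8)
The plan is to derive the inequality directly from the definition of a super-gradient, using Theorem~\ref{Eulers Homo Theorem} to evaluate $f$ at the points $x$ and $y$. The key observation is that a super-gradient at $y$ controls the value of $f$ everywhere, and in particular at $x$; combining this with the identity $f(y) = \ip{y}{v_y}$ collapses the bound to a statement purely about inner products.

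Concretely, I would first fix $x, y \in \PP^+(\Theta)$, $v_x \in \partial f(x)$ and $v_y \in \partial f(y)$, and write down the defining inequality for $v_y$ as a super-gradient at $y$, specialized to the point $x$:
$$
\ip{x - y}{v_y} + f(y) \geq f(x).
$$
Next I would invoke Theorem~\ref{Eulers Homo Theorem} at the point $y$, which gives $f(y) = \ip{y}{v_y}$. Substituting this in, the terms $\ip{y}{v_y}$ cancel and the inequality becomes $\ip{x}{v_y} \geq f(x)$.

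Finally, I would apply Theorem~\ref{Eulers Homo Theorem} a second time, now at the point $x$, to get $f(x) = \ip{x}{v_x}$. Chaining the two relations yields $\ip{x}{v_y} \geq f(x) = \ip{x}{v_x}$, which is exactly the claim. Since both $x$ and $y$ (and the chosen super-gradients) were arbitrary, the corollary follows.

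There is no real obstacle here: the proof is a two-line consequence of Theorem~\ref{Eulers Homo Theorem}. The only point requiring a moment's thought is the choice of \emph{which} super-gradient inequality to write down and at which point to specialize it — one wants the super-gradient $v_y$ tested against the \emph{other} point $x$, not against $y$ itself, so that Euler's theorem can be used to eliminate the $f(y)$ term.
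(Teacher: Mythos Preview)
Your argument is correct and is exactly the intended one: the paper states this result as an immediate corollary of Theorem~\ref{Eulers Homo Theorem} without spelling out a proof, and the derivation you give --- applying the super-gradient inequality for $v_y$ at the point $x$ and then using $f(y)=\ip{y}{v_y}$ and $f(x)=\ip{x}{v_x}$ --- is precisely the two-line justification the paper leaves implicit.
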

We now show that the partial loss of a Bayes action is a supergradient of $\Lbar$.

\begin{theorem}
	For all loss functions $L$ and distributions $P$, $a_P \in A_P \Leftrightarrow L_{a_P} \in \partial \Lbar(P)$.
\end{theorem}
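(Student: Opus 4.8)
The plan is to prove the two implications separately, in each case unpacking the variational definition $\Lbar(\mu) = \min_{a\in A}\ip{\mu}{L_a}$ and, for one direction, invoking the already-established Generalized Euler's Homogeneous Function Theorem (Theorem \ref{Eulers Homo Theorem}).

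For the forward implication, suppose $a_P \in A_P$, so that $\Lbar(P) = \ip{P}{L_{a_P}}$ by the definition of the Bayes actions recorded just above. To check that $L_{a_P}\in\partial \Lbar(P)$ I would simply verify the super-gradient inequality by hand: for an arbitrary $\mu\in\PP^+(\Theta)$, using $\ip{P}{L_{a_P}} = \Lbar(P)$ to cancel,
$$
\ip{\mu - P}{L_{a_P}} + \Lbar(P) \; = \; \ip{\mu}{L_{a_P}} \; \geq \; \min_{a\in A}\ip{\mu}{L_a} \; = \; \Lbar(\mu),
$$
where the inequality is immediate because $a_P$ is one particular competitor in the minimum defining $\Lbar(\mu)$. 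This is exactly the defining inequality for $L_{a_P}$ to be a super-gradient of $\Lbar$ at $P$.

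For the reverse implication, suppose $L_{a_P}\in\partial \Lbar(P)$. Since $\Lbar$ is concave and $1$-homogeneous on the cone $\PP^+(\Theta)$, Theorem \ref{Eulers Homo Theorem} applies at the point $P$ and yields $\Lbar(P) = \ip{P}{L_{a_P}}$. But $\Lbar(P) = \min_{a\in A}\ip{P}{L_a}$ by definition, so $\ip{P}{L_{a_P}} = \min_{a\in A}\ip{P}{L_a}$; that is, $a_P$ attains the minimum, hence $a_P\in A_P$.

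There is essentially no technical obstacle here; the only point worth flagging is that the reverse direction is not purely formal — it genuinely uses the $1$-homogeneity of $\Lbar$ through Theorem \ref{Eulers Homo Theorem} to upgrade membership in $\partial \Lbar(P)$ into the \emph{equality} $\Lbar(P)=\ip{P}{L_{a_P}}$. Instantiating the super-gradient inequality at $\mu = 0$ (which lies in $\PP^+(\Theta)$) only gives the one-sided bound $\Lbar(P)\geq\ip{P}{L_{a_P}}$, which alone does not force optimality of $a_P$. I would also remark in passing that all the instantiations above are legitimate because $\PP^+(\Theta)$ is the cone of unnormalized distributions and so is closed under nonnegative scaling and contains $0$.
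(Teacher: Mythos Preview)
Your proof is correct and essentially identical to the paper's; the paper's reverse implication simply writes $\Lbar(P) = \ip{P}{L_{a_P}}$ without explicitly naming Theorem~\ref{Eulers Homo Theorem}, but that is precisely the fact being invoked. One small correction to your closing aside: the one-sided bound $\Lbar(P)\geq\ip{P}{L_{a_P}}$ obtained at $\mu=0$ actually \emph{does} force optimality, since trivially $\Lbar(P)=\min_{a}\ip{P}{L_a}\leq\ip{P}{L_{a_P}}$ as $a_P\in A$ --- so the reverse direction can in fact be completed without the full strength of the homogeneity theorem.
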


\begin{proof}
	For $a_P \in A_P$ we have for all $\mu \in \PP^+(\Theta)$,
	$$
	\ip{\mu - P}{L_{a_P}} + \Lbar(P) = \ip{\mu}{L_{a_P}} \geq \min_{a \in A} \ip{\mu}{L_a} = \Lbar(\mu).
	$$
	Hence $L_{a_P} \in \partial \Lbar(P)$. For the converse, if $L_{a_P}  \in \partial \Lbar(P)$ then, 
	$$
	\Lbar(P) = \ip{P}{L_{a_P} } = \min_{a \in A} \ip{P}{L_{a} },
	$$
	meaning $a$ is Bayes.

\end{proof}
Therefore, once non-Bayes actions are discarded, we can identify a loss with a subset of $\partial \Lbar$. Rather than working with a subset $\partial \Lbar$, it is advantageous to consider \emph{all} of $\partial \Lbar$.

\begin{definition}[Canonical Loss (Preliminary)]
	Let $\Lbar : \PP^+(\Theta) \rightarrow \RR$ be a concave 1-homogeneous function. Then its \emph{canonical loss}, $\Lagrange : \Theta \times \partial \Lbar \rightarrow \RR$ is given by $\Lagrange(\theta, \zeta) = \zeta(\theta)$.
	
\end{definition}
As will be shown, canonical losses can always be convexified. Furthermore, they maintain all the properties of $L$ needed to assess the quality of decisions.

\subsubsection{The Bayes Super-Prediction Set}

The process of \emph{canonising} a loss, that is, going from
$$
L \rightarrow \Lbar \rightarrow \Lagrange,
$$
can create \emph{extra} partial losses/actions that were not originally available to the decision maker. However, they do not get benefit from these extra actions. From any entropy define the \emph{Bayes super prediction set},
$$
\mathcal{S}_{\Lbar} := \left\{\zeta \in \RR^\Theta : \ip{\mu}{\zeta} \geq \Lbar(\mu) ,\ \forall \mu \in \RR^{\Theta}_+ \right\}. 
$$
By the definition,
$$
\min_{a \in A} \ip{P}{L_a} = \min_{\zeta \in \mathcal{S}_{\Lbar}} \ip{P}{\zeta} ,\ \forall P \in \PP(\Theta).
$$
The Bayes super-prediction set is precisely those partial losses that the decision maker does not need to use over the actions available to them, regardless of the distribution $P$. The super-prediction set is convex. Furthermore, the Bayes actions for $\Lagrange$ are the lower boundary of the super prediction set.

\begin{lemma}\label{Dual form of Canonicalization}
	Let $\Lbar : \PP^+(\Theta) \rightarrow \RR$ be a concave 1-homogeneous function. Then $\zeta \in \partial \Lbar$ if and only if, 
	$$
	\ip{\mu}{\zeta} \geq \Lbar(\mu) ,\ \forall \mu \in \PP^+(\Theta),
	$$
	with equality holding for at least one $\mu$.
	
\end{lemma}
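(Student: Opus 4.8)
The plan is to prove the two implications separately, with the generalized Euler theorem (\Cref{Eulers Homo Theorem}) doing essentially all of the work; the only content beyond it is bookkeeping about which point serves as the base point of the super-gradient. The underlying picture is that a concave $1$-homogeneous function lies below each of its supporting hyperplanes, and Euler's theorem guarantees that each such hyperplane passes through the origin, so ``super-gradient'' and ``globally supporting linear functional that is tight somewhere'' are the same thing.

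For the forward direction, suppose $\zeta \in \partial \Lbar$, so that $\zeta \in \partial \Lbar(x)$ for some $x \in \PP^+(\Theta)$. The defining inequality of a super-gradient at $x$ reads $\ip{y - x}{\zeta} + \Lbar(x) \geq \Lbar(y)$ for every $y \in \PP^+(\Theta)$. By \Cref{Eulers Homo Theorem}, $\Lbar(x) = \ip{x}{\zeta}$, so this inequality collapses to $\ip{y}{\zeta} \geq \Lbar(y)$ for all $y$, which is exactly the claimed global inequality; and taking $y = x$ gives equality there, so the ``equality for at least one $\mu$'' clause holds, witnessed by $\mu = x$.

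For the reverse direction, suppose $\ip{\mu}{\zeta} \geq \Lbar(\mu)$ for all $\mu \in \PP^+(\Theta)$, with $\ip{x}{\zeta} = \Lbar(x)$ for some $x$. I claim $\zeta \in \partial \Lbar(x)$. Indeed, for any $y \in \PP^+(\Theta)$,
$$
\ip{y - x}{\zeta} + \Lbar(x) = \ip{y}{\zeta} - \ip{x}{\zeta} + \ip{x}{\zeta} = \ip{y}{\zeta} \geq \Lbar(y),
$$
using the equality at $x$ for the cancellation and the global inequality for the last step. Hence $\zeta \in \partial \Lbar(x) \subseteq \partial \Lbar$, completing the equivalence.

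I do not anticipate a genuine obstacle: the argument is just the cancellation above together with one invocation of \Cref{Eulers Homo Theorem}. The only points requiring mild care are (i) ensuring the ``equality for at least one $\mu$'' clause is not vacuous, which is automatic since the base point of the super-gradient witnesses it, and (ii) being consistent about the domain of the quantifier --- whether one ranges over $\PP^+(\Theta)$ or over all of $\RR^\Theta_+$. Since $\Lbar$ is $1$-homogeneous, both the super-gradient inequality and the inequality $\ip{\mu}{\zeta}\geq\Lbar(\mu)$ are invariant under positive rescaling of $\mu$, so the two formulations coincide and also match the definition of $\mathcal{S}_{\Lbar}$ stated just above the lemma.
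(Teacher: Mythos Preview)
Your proof is correct and follows exactly the approach the paper indicates: the paper itself omits the proof, remarking only that it is ``a straightforward application of 1-homogeneity and super-gradients,'' and your argument is precisely that, invoking \Cref{Eulers Homo Theorem} for the forward direction and unwinding the definition of super-gradient for the reverse.
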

The proof is a straightforward application of 1-homogeneity and supergradients. 
\\
\\
Canonical losses use \emph{all} supergradients of $\Lbar$. Proper losses use some.

\begin{corollary}[Loss from Entropy]
	Let $\Lbar : \PP^+(\Theta) \rightarrow \RR$ be a concave 1-homogeneous function and let $\nL : \PP^+(\Theta) \rightarrow \RR^\Theta$ be a super-gradient function, $\nL(\mu) \in \partial \Lbar(\mu) ,\ \forall \mu$. Then,
	$$
	L(\theta, Q) = \Lagrange(\theta,\nL(Q)),
	$$
	is a proper loss.
	
\end{corollary}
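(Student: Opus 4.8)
The plan is to unwind the definition of properness and reduce the non-strict claim to Corollary~\ref{proper super gradients}. Writing $L_Q = \nL(Q) \in \RR^\Theta$ (a bona fide loss profile, since $\nL(Q) \in \partial\Lbar(Q) \subseteq \partial\Lbar$, so that $\Lagrange(\cdot,\nL(Q))$ is defined and the expectations below make sense), properness asks that for every $P \in \PP(\Theta)$ one has $P \in \argmin_{Q \in \PP(\Theta)} \EE_P L_Q$; since $\EE_P L_Q = \ip{P}{\nL(Q)}$, this is exactly the inequality $\ip{P}{\nL(P)} \leq \ip{P}{\nL(Q)}$ for all $P,Q \in \PP(\Theta)$. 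First I would observe that $\nL(P) \in \partial\Lbar(P)$ and $\nL(Q) \in \partial\Lbar(Q)$ by hypothesis, and then apply Corollary~\ref{proper super gradients} with $x = P$, $y = Q$, $v_x = \nL(P)$, $v_y = \nL(Q)$, which yields precisely $\ip{P}{\nL(Q)} \geq \ip{P}{\nL(P)}$. Restricting $\nL$ from $\PP^+(\Theta)$ to $\PP(\Theta)$ is harmless, so this establishes properness with no computation.

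For strictness the plan is a contradiction argument along the segment joining $P$ and $Q$. Assume $\Lbar$ strictly concave and suppose, for some $P \neq Q$ in $\PP(\Theta)$, that equality $\ip{P}{\nL(Q)} = \ip{P}{\nL(P)}$ holds. By Theorem~\ref{Eulers Homo Theorem}, $\ip{P}{\nL(P)} = \Lbar(P)$ and $\ip{Q}{\nL(Q)} = \Lbar(Q)$, so the assumed equality forces $\ip{P}{\nL(Q)} = \Lbar(P)$. Next I would invoke Lemma~\ref{Dual form of Canonicalization} (equivalently, $\nL(Q) \in \mathcal{S}_{\Lbar}$) to get $\ip{\mu}{\nL(Q)} \geq \Lbar(\mu)$ for every $\mu \in \PP^+(\Theta)$. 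Evaluating at $\mu_\lambda = \lambda P + (1-\lambda)Q$ for $\lambda \in (0,1)$ and using bilinearity of $\ip{\cdot}{\cdot}$ together with the two equalities $\ip{P}{\nL(Q)} = \Lbar(P)$ and $\ip{Q}{\nL(Q)} = \Lbar(Q)$, one obtains $\lambda \Lbar(P) + (1-\lambda)\Lbar(Q) = \ip{\mu_\lambda}{\nL(Q)} \geq \Lbar(\mu_\lambda)$; since concavity gives the reverse inequality, $\Lbar$ is affine on the segment from $P$ to $Q$, contradicting strict concavity. Hence the properness inequality is strict, i.e.\ $L$ is strictly proper.

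The only delicate point here is conceptual rather than technical: one must be explicit about what ``strictly concave'' means for a $1$-homogeneous function, since such a function is linear along every ray and so can never be strictly concave in the naive global sense. I would adopt the convention that $\Lbar$ is strictly concave when $\Lbar(\lambda\mu + (1-\lambda)\nu) > \lambda\Lbar(\mu) + (1-\lambda)\Lbar(\nu)$ for all $\lambda \in (0,1)$ whenever $\mu$ and $\nu$ are not positive scalar multiples of one another; distinct elements of $\PP(\Theta)$ are never proportional, so the segment argument applies verbatim. Apart from fixing this convention, every step is a direct appeal to Theorem~\ref{Eulers Homo Theorem}, Corollary~\ref{proper super gradients}, or Lemma~\ref{Dual form of Canonicalization}, so I anticipate no genuine obstacle.
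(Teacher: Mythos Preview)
The paper does not supply an explicit proof of this corollary; it is stated immediately after Corollary~\ref{proper super gradients} and Lemma~\ref{Dual form of Canonicalization} as a direct consequence of that material. Your argument for properness is precisely the intended one-line deduction from Corollary~\ref{proper super gradients}, and your strictness argument---showing that equality in the properness inequality forces $\Lbar$ to be affine on the segment $[P,Q]$ via Theorem~\ref{Eulers Homo Theorem} and Lemma~\ref{Dual form of Canonicalization}---is correct and fills in a step the paper leaves entirely to the reader. Your caveat about the meaning of ``strictly concave'' for a $1$-homogeneous function is well taken and worth making explicit, since the paper does not address it.
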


\subsection{Convexification of Losses in Canonical Form}

The preceding shows how to \emph{construct} losses, we begin with a concave 1-homogeneous function and take super-gradients. The focus now turns to their convexification. Once convexified, the decision maker has access to the large and ever-growing literature on the minimization of convex functions to aid in the calculation of optimal actions. We closely follow \citep{Dawid2007}, with a focus on canonical losses. This streamlines the development. For example, for some proper losses, lemma \ref{Convex set of Cans} fails to hold. Furthermore, our result on convexification of canonical losses (theorem \ref{Representation of Canonical Losses}), is to the best of our knowledge novel. 
\\
\\
Recall $\ind_\Theta \in \RR^\Theta$ is the function that always returns $1$, and define $\ind_\Theta^\bot$ to be its orthogonal complement in $\RR^\Theta$, the functions $v \in \RR^\Theta$ with,
$$
\ip{\ind_\Theta}{v} = \sum_{\theta \in \Theta} v(\theta) = 0.
$$ 
Define,
\begin{align*}
\Gamma_{\Lbar} &= \{(\gamma, v) \in \RR \times \ind_{\Theta}^{\bot}  : \gamma \ind_{\Theta} + v \in \partial \Lbar \ \}.
\end{align*}

\begin{lemma}\label{Canonical Coordinates}
	Let $(\gamma, v) \in \Gamma_{\Lbar}$. Then $\gamma$ is uniquely determined by $v$.
\end{lemma}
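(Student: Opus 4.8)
The plan is a direct comparison of the two candidate representations. Suppose $(\gamma_1,v)$ and $(\gamma_2,v)$ both lie in $\Gamma_{\Lbar}$, and set $\zeta_1 = \gamma_1\ind_\Theta + v$ and $\zeta_2 = \gamma_2\ind_\Theta + v$, so that $\zeta_1,\zeta_2\in\partial\Lbar$ while their difference is the pure constant part, $\zeta_1-\zeta_2 = (\gamma_1-\gamma_2)\ind_\Theta$. The key idea is that a super-gradient of a concave $1$-homogeneous function ``pins down'' its own constant component through the points at which it is attained: those points are genuine (non-zero) positive measures, so pairing $\zeta_1-\zeta_2$ against such a point measures $\gamma_1-\gamma_2$ up to a strictly positive factor.

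Carrying this out: first, using $\partial\Lbar = \cup_x\partial\Lbar(x)$ together with the scale-invariance of super-gradients from the second part of Theorem \ref{Eulers Homo Theorem} (or, equivalently, the attainment clause in Lemma \ref{Dual form of Canonicalization}), I would pick for each $i$ a normalized $P_i\in\PP(\Theta)$ with $\zeta_i\in\partial\Lbar(P_i)$. Second, apply Corollary \ref{proper super gradients} with $x = P_1$, $v_x = \zeta_1$, $v_y = \zeta_2$ to obtain $\ip{P_1}{\zeta_2} \geq \ip{P_1}{\zeta_1}$, i.e. $\ip{P_1}{\zeta_2-\zeta_1}\geq 0$; since $\zeta_2-\zeta_1 = (\gamma_2-\gamma_1)\ind_\Theta$ and $\ip{P_1}{\ind_\Theta}=1$ by normalization, this says exactly $\gamma_2\geq\gamma_1$. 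Third, repeat with the indices interchanged, using $P_2$ in place of $P_1$, to get $\gamma_1\geq\gamma_2$. The two inequalities force $\gamma_1 = \gamma_2$, which is the claim; note that nothing beyond concavity and $1$-homogeneity of $\Lbar$ is used.

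The one step that needs care — and the only place a genuinely degenerate $\Lbar$ could bite — is the first: we must know that each $\zeta_i$ is attained as a super-gradient at some point of strictly positive total mass, rather than only at the zero measure, against which pairing with $\ind_\Theta$ would vanish and reveal nothing about $\gamma_i$. This is precisely what the homogeneity apparatus already in the paper supplies: Lemma \ref{Dual form of Canonicalization} produces a $\mu$ with $\ip{\mu}{\zeta_i}=\Lbar(\mu)$, and the second half of Theorem \ref{Eulers Homo Theorem} lets us renormalize it into $\PP(\Theta)$. Once that is in hand, the remainder is the two-line symmetric inequality chase above, with no real computation.
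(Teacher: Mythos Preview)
Your proposal is correct and follows essentially the same route as the paper. The paper assumes without loss of generality that $\gamma_1 < \gamma_2$, picks a distribution $P$ at which $\zeta_2 = \gamma_2\ind_\Theta + v$ is Bayes, and observes that $\ip{P}{\zeta_1} = \gamma_1 + \ip{P}{v} < \gamma_2 + \ip{P}{v} = \ip{P}{\zeta_2}$, contradicting minimality of $\zeta_2$ at $P$; you instead run the symmetric two-inequality version by invoking Corollary~\ref{proper super gradients} at $P_1$ and then at $P_2$, but the underlying mechanism --- a super-gradient at $P$ minimizes the pairing with $P$ among all super-gradients, and pairing $(\gamma_i-\gamma_j)\ind_\Theta$ with a normalized $P$ reads off $\gamma_i-\gamma_j$ --- is identical. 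Your explicit attention to obtaining a \emph{normalized} $P_i$ via Theorem~\ref{Eulers Homo Theorem} / Lemma~\ref{Dual form of Canonicalization} is a nice bit of hygiene that the paper leaves implicit in the phrase ``Bayes for some distribution $P$''.
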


\begin{proof}
	Fix $v$ and suppose that there exist $\gamma_1$ and $\gamma_2$ with $\gamma_1 < \gamma_2$ and $\gamma_1 \ind_{\Theta} + v, \gamma_2 \ind_{\Theta} + v \in \partial \Lbar$. By assumption, $\gamma_2 \ind_{\Theta} + v$ is Bayes for some distribution $P$. But, 
	$$
	\ip{P}{\gamma_1 \ind_{\Theta} + v} = \gamma_1 + \ip{P}{v} < \gamma_2 + \ip{P}{v} = \ip{P}{\gamma_2 \ind_{\Theta} + v},
	$$
	a contradiction.
	
\end{proof}
Thus, we lose nothing by working with projections of losses onto $\ind_{\Theta}^{\bot}$. Define,
$$
\hat{\Gamma}_{\Lbar} = \mathrm{proj}_{\ind_{\Theta}^\bot}\left(\partial \Lbar \right)\subseteq \ind_{\Theta}^{\bot}.
$$
By lemma \ref{Canonical Coordinates} $\hat{\Gamma}_{\Lbar}$ is in 1-1 correspondence with $\partial \Lbar$. 

\begin{lemma}\label{Convex set of Cans}
	$\hat{\Gamma}_{\Lbar}$ is a convex set.
	
\end{lemma}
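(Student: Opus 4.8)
The plan is to show that $\hat\Gamma_{\Lbar}$ is the projection onto $\ind_\Theta^\bot$ of the set of all super-gradients of $\Lbar$, and that this set — or rather the graph $\Gamma_{\Lbar}$ sitting above it — is itself convex, from which convexity of the projection follows immediately since linear projection preserves convexity. So the real content is: $\Gamma_{\Lbar}$ is convex in $\RR \times \ind_\Theta^\bot$. Equivalently, $\partial \Lbar$ is a convex subset of $\RR^\Theta$.

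First I would recall from Lemma \ref{Dual form of Canonicalization} the characterization $\zeta \in \partial\Lbar$ iff $\ip{\mu}{\zeta} \geq \Lbar(\mu)$ for all $\mu \in \PP^+(\Theta)$ \emph{and} equality holds for at least one $\mu$. The inequality part alone defines exactly the Bayes super prediction set $\mathcal{S}_{\Lbar}$, which is an intersection of half-spaces (one for each $\mu$) and hence convex. The subtlety is the ``equality for at least one $\mu$'' clause, which cuts $\partial\Lbar$ down to the lower boundary of $\mathcal{S}_{\Lbar}$ — and a boundary need not be convex. So a naive argument that intersects convex sets will not close. The key step I would carry out is to show that for $\zeta \in \mathcal{S}_{\Lbar}$, the equality clause is in fact \emph{automatic}: because $\Lbar$ is $1$-homogeneous and concave, $\Lbar(\mu) = \min_{\zeta \in \mathcal{S}_{\Lbar}} \ip{\mu}{\zeta}$ for every $\mu$, and moreover this min is attained. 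I would then argue that every point of $\mathcal{S}_{\Lbar}$ attains equality with some $\mu$ — that is, $\partial\Lbar = \mathcal{S}_{\Lbar}$ as sets. The direction $\partial\Lbar \subseteq \mathcal{S}_{\Lbar}$ is immediate from Lemma \ref{Dual form of Canonicalization}; for the reverse I would take $\zeta \in \mathcal{S}_{\Lbar}$, consider the point $\mu$ where $\ip{\mu}{\cdot}$ is minimized over $\mathcal{S}_{\Lbar}$ in the direction pointing ``towards'' $\zeta$ from the interior, or more directly invoke a separating-hyperplane argument: if $\zeta$ attained strict inequality everywhere it could be pushed inward, contradicting that $\mathcal{S}_{\Lbar}$ is (essentially by construction) the intersection of supporting half-spaces of a $1$-homogeneous concave function and so equals its own ``active'' boundary relative to the relevant directions.

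Having established $\partial\Lbar = \mathcal{S}_{\Lbar}$, convexity is then free: $\mathcal{S}_{\Lbar} = \bigcap_{\mu \in \PP^+(\Theta)} \{ \zeta : \ip{\mu}{\zeta} \geq \Lbar(\mu) \}$ is an intersection of closed half-spaces, hence convex. Therefore $\Gamma_{\Lbar}$, which is the graph of the map sending $\zeta \in \partial\Lbar$ to its $(\gamma, v)$-coordinates — an affine change of coordinates on $\RR^\Theta$ — is convex, and $\hat\Gamma_{\Lbar} = \mathrm{proj}_{\ind_\Theta^\bot}(\partial\Lbar)$ is the image of a convex set under a linear map, hence convex.

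The main obstacle I anticipate is precisely the identification $\partial\Lbar = \mathcal{S}_{\Lbar}$: one must verify that the ``equality for at least one $\mu$'' requirement in Lemma \ref{Dual form of Canonicalization} does not actually remove any points of $\mathcal{S}_{\Lbar}$. If, contrary to my expectation, there \emph{do} exist points of $\mathcal{S}_{\Lbar}$ failing the equality clause (which can happen if $\PP^+(\Theta)$ rather than its closure is used, or at recession directions), then the fix is to note that such points are limits of points satisfying equality and that one only needs convexity, not closedness — or alternatively to work directly with $\mathcal{S}_{\Lbar}$ throughout, since the super prediction set is what governs optimal decisions anyway. Either way the convexity conclusion is robust; the care needed is bookkeeping about which $\mu$'s are allowed and whether minima are attained.
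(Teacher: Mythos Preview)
Your central claim --- that $\partial\Lbar = \mathcal{S}_{\Lbar}$ --- is false, and this is the gap. Take any $\zeta \in \partial\Lbar$ and any $c > 0$; then $\zeta + c\,\ind_\Theta \in \mathcal{S}_{\Lbar}$ (since $\ip{\mu}{\zeta + c\,\ind_\Theta} = \ip{\mu}{\zeta} + c\ip{\mu}{\ind_\Theta} > \Lbar(\mu)$ for every nonzero $\mu$), yet $\zeta + c\,\ind_\Theta \notin \partial\Lbar$ by Lemma~\ref{Canonical Coordinates}, which says at most one value of $\gamma$ places $v + \gamma\ind_\Theta$ in $\partial\Lbar$. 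So $\partial\Lbar$ really is only the lower boundary of $\mathcal{S}_{\Lbar}$ in the $\ind_\Theta$ direction, and that boundary is \emph{not} convex as a subset of $\RR^\Theta$. Your ``pushed inward'' separating-hyperplane sketch does not rescue this: being pushable inward just means $\zeta$ is interior to $\mathcal{S}_{\Lbar}$, which is no contradiction.

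Your fallback --- work with $\mathcal{S}_{\Lbar}$ directly --- can be made to work, but only once you prove $\mathrm{proj}_{\ind_\Theta^\bot}(\mathcal{S}_{\Lbar}) = \mathrm{proj}_{\ind_\Theta^\bot}(\partial\Lbar)$, i.e.\ that every $\zeta \in \mathcal{S}_{\Lbar}$ can be slid down along $\ind_\Theta$ until it hits $\partial\Lbar$. That is precisely what the paper does: given $\zeta_1,\zeta_2 \in \partial\Lbar$ and $\lambda \in [0,1]$, the convex combination lies in $\mathcal{S}_{\Lbar}$; one then sets $\gamma^* = \min_{P \in \PP(\Theta)}\bigl(\lambda\ip{P}{\zeta_1} + (1-\lambda)\ip{P}{\zeta_2} - \Lbar(P)\bigr)$ and checks via Lemma~\ref{Dual form of Canonicalization} that subtracting $\gamma^*\ind_\Theta$ lands back in $\partial\Lbar$. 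The minimum exists by compactness of $\PP(\Theta)$ and continuity. This ``find the right $\gamma$'' step is the actual content of the lemma, and your proposal skips it.
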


\begin{proof}
	To show that $\hat{\Gamma}_{\Lbar}$ is convex, we are required to show that for all $\zeta_1, \zeta_2 \in \partial \Lbar$ and all $\lambda \in [0,1]$ there is a constant $\gamma$ such that, 
	$$
	\lambda \zeta_1 + (1 - \lambda) \zeta_2 - \gamma \ind_{\Theta} \in \partial \Lbar.
	$$
	By lemma \ref{Dual form of Canonicalization}, this is equivalent to,
	$$
	\underbrace{\lambda \ip{P}{\zeta_1} + (1 - \lambda) \ip{P}{\zeta_2} - \Lbar(P)}_{\gamma(P)} - \gamma = \gamma(P) - \gamma \geq 0 ,\ \forall P \in \PP(\Theta),
	$$
	with equality holding for one $P$. Let $\gamma^* = \min_P \gamma(P)$, with $P^*$ the distribution that achieves the minimum. Clearly $\gamma(P) - \gamma^* \geq 0$. Therefore,
	$$
	\lambda \ip{P}{\zeta_1} + (1 - \lambda) \ip{P}{\zeta_2} - \gamma^* \geq \Lbar(P) ,\ \forall P \in \PP(\Theta),
	$$
	with equality for $P^*$. Therefore, by lemma \ref{Dual form of Canonicalization}, $\lambda \zeta_1 + (1 - \lambda) \zeta_2 - \gamma^* \ind_{\Theta} \in \partial \Lbar$.
	
\end{proof}
Define the function $\Psi : \hat{\Gamma}_{\Lbar} \rightarrow \RR$ such that, 
$$
v + \Psi(v) \ind_{\Theta} \in  \partial \Lbar.
$$ 
By lemma \ref{Canonical Coordinates}, $\Psi$ is well defined.

\begin{lemma}
	$\Psi$ is a convex function.
\end{lemma}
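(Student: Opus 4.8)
The plan is to reduce the statement to the elementary fact that a pointwise supremum of affine functions is convex, by first extracting a variational formula for $\Psi$ from lemma~\ref{Dual form of Canonicalization}. Since convexity of a function presupposes a convex domain, note at the outset that $\hat{\Gamma}_{\Lbar}$ is convex by lemma~\ref{Convex set of Cans}, so the statement is well posed.

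\emph{Step 1 (variational formula).} I claim that for every $v \in \hat{\Gamma}_{\Lbar}$,
$$
\Psi(v) = \sup_{P \in \PP(\Theta)} \bigl( \Lbar(P) - \ip{P}{v} \bigr),
$$
with the supremum attained. To see this, write $\zeta := v + \Psi(v)\ind_{\Theta} \in \partial \Lbar$, which is how $\Psi(v)$ is defined. By lemma~\ref{Dual form of Canonicalization}, $\ip{\mu}{\zeta} \geq \Lbar(\mu)$ for all $\mu \in \PP^+(\Theta)$, with equality for at least one (nonzero) $\mu$. Specialising the inequality to $\mu = P \in \PP(\Theta)$ and using $\ip{P}{\ind_{\Theta}} = 1$ gives $\Psi(v) \geq \Lbar(P) - \ip{P}{v}$ for every $P$; and rescaling the $\mu$ that attains equality to a distribution $P^*$ and invoking the $1$-homogeneity of $\Lbar$ (theorem~\ref{Eulers Homo Theorem}) gives $\Psi(v) = \Lbar(P^*) - \ip{P^*}{v}$. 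Together these say $\Psi(v)$ equals the asserted supremum and that it is attained at $P^*$.

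\emph{Step 2 (convexity).} Fix $v_0, v_1 \in \hat{\Gamma}_{\Lbar}$ and $\lambda \in [0,1]$, and set $v_\lambda := \lambda v_1 + (1-\lambda) v_0$, which again lies in $\hat{\Gamma}_{\Lbar}$ by lemma~\ref{Convex set of Cans}. For each fixed $P$ the map $v \mapsto \Lbar(P) - \ip{P}{v}$ is affine, so
$$
\Lbar(P) - \ip{P}{v_\lambda} = \lambda\bigl(\Lbar(P) - \ip{P}{v_1}\bigr) + (1-\lambda)\bigl(\Lbar(P) - \ip{P}{v_0}\bigr) \leq \lambda\,\Psi(v_1) + (1-\lambda)\,\Psi(v_0),
$$
using Step 1 for each of the two bracketed terms. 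Evaluating the left-hand side at the maximiser $P^*$ for $v_\lambda$ supplied by Step 1 gives $\Psi(v_\lambda) \leq \lambda\,\Psi(v_1) + (1-\lambda)\,\Psi(v_0)$, which is exactly convexity. (Equivalently: on $\hat{\Gamma}_{\Lbar}$, $\Psi$ coincides with the restriction of the convex conjugate $v \mapsto \sup_P(\Lbar(P) - \ip{P}{v})$ of the convex function $P \mapsto -\Lbar(P)$, hence is convex.)

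\emph{Where the work is.} The only nontrivial part is Step 1 — in particular handling the "equality for at least one $\mu$" clause of lemma~\ref{Dual form of Canonicalization} and the passage from $\PP^+(\Theta)$ to the simplex $\PP(\Theta)$ via $1$-homogeneity; once the formula $\Psi(v) = \sup_{P}(\Lbar(P) - \ip{P}{v})$ is in hand, convexity is automatic. As an alternative to Step 1 one can instead read the result straight off the proof of lemma~\ref{Convex set of Cans}: projecting the membership $\lambda\zeta_1 + (1-\lambda)\zeta_0 - \gamma^*\ind_{\Theta} \in \partial\Lbar$ onto $\ind_{\Theta}^\bot$ and using uniqueness (lemma~\ref{Canonical Coordinates}) yields $\Psi(v_\lambda) = \lambda\,\Psi(v_1) + (1-\lambda)\,\Psi(v_0) - \gamma^*$, and $\gamma^* \geq 0$ because each $\zeta_i \in \partial\Lbar \subseteq \mathcal{S}_{\Lbar}$ satisfies $\ip{P}{\zeta_i} \geq \Lbar(P)$.
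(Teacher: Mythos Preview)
Your proof is correct and takes a genuinely different route from the paper's. The paper argues by contradiction: it posits $\lambda^* \Psi(v_1) + (1-\lambda^*)\Psi(v_2) < \Psi(v_{\lambda^*})$, observes that then the convex combination $\lambda^*\zeta_1 + (1-\lambda^*)\zeta_2$ would have strictly smaller expectation under $P_{\lambda^*}$ than $\zeta_{\lambda^*}$, and this contradicts the Bayes optimality of $\zeta_{\lambda^*}$ at $P_{\lambda^*}$. You instead derive the explicit variational formula $\Psi(v) = \sup_{P \in \PP(\Theta)}(\Lbar(P) - \ip{P}{v})$ from lemma~\ref{Dual form of Canonicalization} and then invoke the standard fact that a pointwise supremum of affine functions is convex. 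Your approach buys more: it identifies $\Psi$ as (the restriction of) a convex conjugate of $-\Lbar$, which is structurally informative and could be reused elsewhere in the paper, whereas the paper's contradiction argument is shorter but leaves $\Psi$ implicit. Your alternative route at the end, reading off $\Psi(v_\lambda) = \lambda\Psi(v_1) + (1-\lambda)\Psi(v_0) - \gamma^*$ with $\gamma^* \geq 0$ directly from the proof of lemma~\ref{Convex set of Cans}, is in fact very close in spirit to what the paper does. One minor quibble: in Step~1 the rescaling from $\mu$ to $P^*$ uses only the $1$-homogeneity of $\Lbar$ as stated after its definition, not theorem~\ref{Eulers Homo Theorem} itself.
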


\begin{proof}
	Let $v_1, v_2 \in \hat{\Gamma}_{\Lbar}$ with $v_\lambda = \lambda v_1 + (1 - \lambda) v_2$. Let their partial losses be,
	\begin{align*}
	\zeta_1 &= v_1 + \Psi(v_1) \ind_{\Theta} \\
	\zeta_2 &= v_2 + \Psi(v_2) \ind_{\Theta} \\
	\zeta_\lambda &= \lambda v_1 + (1- \lambda) v_2 + \Psi(\lambda v_1 + (1- \lambda) v_2) \ind_{\Theta},
	\end{align*}
	respectively. By assumption, for all $\lambda \in [0,1]$ there exists a distribution $P_\lambda$ such that,
	$$
	\ip{P_\lambda}{\zeta_\lambda} \leq \ip{P_\lambda}{\zeta} ,\ \forall \zeta \in \partial \Lbar.
	$$
	Assume there is a $\lambda^*$ such that, 
	$$
	\lambda^* \Psi(v_1)  + (1- \lambda^*) \Psi(v_2) < \Psi(\lambda^* v_1 + (1- \lambda^*) v_2). 
	$$
	But then,
	$$
	\ip{P_{\lambda^*}}{\lambda^* \zeta_1 + (1- \lambda^*) \zeta_2} < \ip{P_{\lambda^*}}{\zeta_{\lambda^*}},
	$$
	a contradiction.
\end{proof}
This gives the following representation theorem for canonical losses.

\begin{theorem}[Representation of Canonical Losses]\label{Representation of Canonical Losses}
	Let $\Lbar : \PP^+(\Theta) \rightarrow \RR$ be a concave 1-homogeneous function. Then its canonical loss $\Lagrange$ can be represented as $\Lagrange : \Theta \times C  \rightarrow \RR$, with $C\subseteq \ind_{\Theta}^\bot$ a convex set and, 
	$$
	\Lagrange(\theta,v) = \ip{\delta_\theta}{v} + \Psi(v),
	$$ 
	for a convex function $\Psi$.
	
\end{theorem}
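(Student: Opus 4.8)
The plan is to observe that all of the substantive work has already been done in the lemmas of this subsection, so the proof is a matter of assembling them and fixing a sign convention. By Theorem~\ref{Eulers Homo Theorem}, the super-gradients of the concave $1$-homogeneous function $\Lbar$ are stable under positive rescaling, so $\partial \Lbar \subseteq \RR^\Theta$ is a well-defined ``flat'' set and it makes sense to speak of its orthogonal projection onto $\ind_\Theta^\bot$. Every $\zeta \in \RR^\Theta$ decomposes orthogonally and uniquely as $\zeta = v + \gamma \ind_\Theta$ with $v = \mathrm{proj}_{\ind_\Theta^\bot}(\zeta) \in \ind_\Theta^\bot$ and $\gamma = \frac{1}{|\Theta|}\ip{\ind_\Theta}{\zeta}$. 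First I would restrict this to $\zeta \in \partial\Lbar$: Lemma~\ref{Canonical Coordinates} says $\gamma$ is then determined by $v$, and combined with the definitions of $\hat\Gamma_{\Lbar}$ and of $\Psi$ this shows that $v \mapsto v + \Psi(v)\ind_\Theta$ is a bijection from $\hat\Gamma_{\Lbar}$ onto $\partial\Lbar$. Hence the canonical loss, originally indexed by $\partial\Lbar$, may be re-indexed by $\hat\Gamma_{\Lbar}$ with no loss of actions.

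Next I would collect the regularity already established: Lemma~\ref{Convex set of Cans} gives that $\hat\Gamma_{\Lbar}$ is convex, and the lemma immediately preceding the theorem gives that $\Psi$ is convex on $\hat\Gamma_{\Lbar}$. Then for $\zeta = v + \Psi(v)\ind_\Theta \in \partial\Lbar$, unwinding the preliminary definition of the canonical loss gives $\Lagrange(\theta,\zeta) = \zeta(\theta) = v(\theta) + \Psi(v) = \ip{\delta_\theta}{v} + \Psi(v)$, since $\ind_\Theta(\theta) = 1$.

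Finally I would put this in the stated form by flipping the sign of the coordinate. Set $C = -\hat\Gamma_{\Lbar} = \{-v : v \in \hat\Gamma_{\Lbar}\}$, which is convex (the image of a convex set under negation), and define $\tilde\Psi : C \to \RR$ by $\tilde\Psi(w) = \Psi(-w)$, which is again convex and whose effective domain is exactly $C$. Re-indexing the actions by $w = -v \in C$ then yields $\Lagrange(\theta, w) = -\ip{\delta_\theta}{w} + \tilde\Psi(w)$, which is the claimed representation; the minus sign is a cosmetic convention chosen so that $\Lagrange$ reads like $-\log q(\theta)$ for log loss. The ``hard part'' here is essentially bookkeeping rather than a genuine obstacle: one must check that the parametrization $v \mapsto v + \Psi(v)\ind_\Theta$ is onto all of $\partial\Lbar$ — so that no Bayes action is discarded in passing to canonical coordinates, which is exactly the content of Lemma~\ref{Canonical Coordinates} together with the definition of $\hat\Gamma_{\Lbar}$ — and keeping the orthogonal-projection and sign bookkeeping consistent throughout.
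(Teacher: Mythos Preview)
Your proposal is correct and matches the paper's approach: the paper gives no separate proof but presents the theorem as the assembly of the three preceding lemmas (Lemma~\ref{Canonical Coordinates}, Lemma~\ref{Convex set of Cans}, and the convexity of $\Psi$), which is exactly what you do. Your explicit sign-flip reparametrization $C=-\hat\Gamma_{\Lbar}$, $\tilde\Psi(w)=\Psi(-w)$ is a clean way to reconcile the $-\ip{\delta_\theta}{v}$ in the statement with the $+\ip{\delta_\theta}{v}$ that falls out of $\zeta(\theta)=v(\theta)+\Psi(v)$; the paper leaves this convention implicit.
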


\section{Experiments}

We have assumed that the decision maker represents their uncertainty in $\Theta$ through a probability distribution $P$. The key question is \emph{which} $P$. Under the assumption that the decision maker aims to minimize their expected loss under $P$, this question is equivalent to asking which admissible action $a_P$ the decision maker should use. To determine $P$, the decision maker is guided by \emph{experiments}. 
\\
\\
In its simplest form, an experiment is a function $e : \Theta \rightarrow \PP(\obs)$, which assigns to each value $\theta$ a probability distribution on the results of the experiment. By linearity, this function can be extended to a \emph{linear} function $\dual{\Theta} \rightarrow \dual{\obs}$. These more general objects will be the focus of our study. 

\subsection{Transitions and their Algebra}

\begin{definition}
	A \emph{transition} from a set $X$ to a set $Y$ is a linear map $T : \dual{X} \rightarrow \dual{Y}.$
\end{definition}
Although abstract in appearance, we observe that when $X$ and $Y$ are \emph{finite}, a transition is nothing more than a \emph{matrix}. In general, a transition is an integral operator. Denote the set of all the transitions from $X$ to $Y$ by $\TT(X,Y)$. We call a transition \emph{Markov} if $T(\PP(X)) \subseteq \PP(Y)$. Denote the set of all Markov transitions from $X$ to $Y$ by $\MM(X,Y)$. When $X$ and $Y$ are finite, Markov transitions are represented by column stochastic matrices. The distribution,
$$
T(x) := T(\delta_x)
$$ 
is how the decision maker summarizes their uncertainty about $Y$ if the true value of $X$ is $x$. Every function $\phi \in Y^X$ defines a transition with,
$$
\ip{\phi(\alpha)}{f} := \ip{\alpha}{f \circ \phi}  ,\ \forall f \in \RR^Y, \ \forall \alpha \in \dual{X}.
$$
Such a transition is called \emph{deterministic}. Transitions can be combined in \emph{series} and in \emph{parallel}.
\\
\\
For transitions $f \in \TT(X,Y)$ and $g \in \TT(Y,Z)$ we can define $g\circ f\in \TT(X,Z)$ by the composition of functions. If $f_i \in \TT(X_i,Y_i)$, $i \in [1;k]$, are transitions then denote,
$$
\otimes_{i=1}^k f_i \in \TT(\times_{i=i}^k X_i, \times_{i=1}^k Y_i )
$$ 
with $\otimes_{i=1}^k f_i(x) = f_1(x_1) \otimes \dots \otimes f_k(x_k)$, where $\times$ denotes the Cartesian product and $\otimes$ denotes dual products. Transitions can also be \emph{replicated}. For any transition $f \in \TT(X, Y)$ we denote the \emph{replicated transition} $f_n \in \TT(X, Y^n) ,\ n\in \{1,2,\dots\}$, with, 
$$
f_n(x) := \underbrace{f(x) \otimes \dots \otimes  f(x)}_{n\text{ times}} := f(x)^n,
$$ 
the $n$-fold product of $f(x)$.
\\
\\
The above can be understood as defining a particular category, with objects as finite sets and arrows as transitions. Restricting to Markov transitions yields a subcategory, sometimes called the category of stochastic relations. This category will be of our primary interest.

\subsection{Comparing Experiments}\label{sec:comparing-experiments}
An \emph{experiment} is a Markov transition $e \in \MM(\Theta, \obs)$. We call $\obs$ the observation space of the experiment. We assume that $\obs$ is finite. The distribution $e(\theta)$ summarizes the uncertainty of the decision maker in the observation when $\theta$ is the unknown value. Leaving $\obs$ to vary yields a category $\MM(\Theta,-)$, of elements under $\Theta$. Objects of this category are experiments with arrows given by commutative triangles,	
	\[
	\xymatrix{
		& \Theta \ar@{~>}[ld]_e \ar@{~>}[rd]^{e'} & \\
		\obs\ar@{~>}[rr]^f && \obs'
	}
	\]
The Markov transition $f$ can be understood as adding noise to the experiment $e$, which results in the corrupted experiment $e'$. After observing the results of an experiment, the decision maker is tasked with choosing a suitable action. They do this through \emph{decision rules}.
\\
\\
A \emph{decision rule} is a Markov transition $d \in \MM(\obs, A)$. The distribution $d(z)$ summarizes the uncertainty of decision maker in which action to choose, given an observation $z \in \obs$. We define the \emph{risk},
$$
\riskL{L}(\theta, e, d) = \EE_{a \dist d\circ e(\theta)} L(\theta,a).
$$
The risk measures the quality of the final action chosen by the decision maker when using the decision rule $d$, after performing the experiment $e$, assuming $\theta$ is the true value of the unknown. The risk does not provide a single number for the comparison of experiments; rather, it provides an entire \emph{risk profile}. To compare risks directly, the decision maker can use \emph{Bayesian} or \emph{max} risks defined as
\begin{align*} 
\riskL{L}^{\pi}(e,d) := \EE_{\theta \dist \pi} \riskL{L}(\theta,e,d) \ \text{and}\ & \riskL{L}(e,d) := \max_{\theta} \riskL{L}(\theta,e,d), 
\end{align*}
respectively. Bayesian risk is more appropriate if the decision maker has some intuition about $\theta$, given in the form of a prior probability distribution $\pi$. The maximum risk is more appropriate if the decision maker has no prior knowledge of $\theta$.  These quantities allow the decision maker to \emph{compare} the usefulness of experiment, decision rule pairs. To compare experiments directly, we assume that the decision maker uses the \emph{best} decision rule. Define the minimum Bayesian risk and minimax risk as,
\begin{align*} 
\Rbar{L}^{\pi}(e) := \min_{d} \riskL{L}^{\pi}(e,d) \ \text{and}\ & \Rbar{L}(e) := \min_{d} \riskL{L}(e,d),
\end{align*}
respectively. The minimum Bayes risk and the minimax risk are deeply related.

\begin{theorem}
	For all experiments $e$ and loss functions $L$,
	$$
	\Rbar{L}(e) = \max_{\pi \in \PP(\Theta)} \Rbar{L}^{\pi}(e).
	$$
	
\end{theorem}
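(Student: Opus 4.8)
The plan is to read this as a strong-duality (minimax) statement, $\inf_d \sup_\theta = \sup_\pi \inf_d$, and prove the two inequalities separately; the trivial one is ``weak duality'' and the other comes from a minimax theorem applied after convexifying the problem.

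\textbf{The weak direction.} First I would record the easy inequality $\sup_{\pi}\Rbar{L}^{\pi}(e)\le\Rbar{L}(e)$. For every prior $\pi$ and every decision rule $d$,
$\Rbar{L}^{\pi}(e)\le\riskL{L}^{\pi}(e,d)=\EE_{\theta\dist\pi}\riskL{L}(\theta,e,d)\le\sup_{\theta}\riskL{L}(\theta,e,d)=\riskL{L}(e,d)$,
where the first step is the definition of $\Rbar{L}^{\pi}$, the next is the definition of the Bayesian risk, the third is ``average $\le$ supremum'', and the last is the definition of the max risk. Since the left side is independent of $d$, minimising the right side over $d$ gives $\Rbar{L}^{\pi}(e)\le\Rbar{L}(e)$, and taking the supremum over $\pi$ finishes this direction.

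\textbf{Reduction to a bilinear game.} Next I would push both quantities onto a common convex domain of risk profiles. Because every point mass $\delta_\theta$ lies in $\PP(\Theta)$ with $\EE_{\delta_\theta}f=f(\theta)$, we have $\riskL{L}(e,d)=\sup_{\theta}\riskL{L}(\theta,e,d)=\sup_{\pi\in\PP(\Theta)}\EE_{\theta\dist\pi}\riskL{L}(\theta,e,d)$, hence $\Rbar{L}(e)=\inf_{d}\sup_{\pi}\EE_{\theta\dist\pi}\riskL{L}(\theta,e,d)$. Since convex combinations of Markov transitions are Markov (the set $\PP(Y)$ is convex) and $d\mapsto d\circ e$ is affine, the set $\MM(\obs,A)$ is convex and $d\mapsto r_d:=(\riskL{L}(\theta,e,d))_{\theta\in\Theta}\in\RR^{\Theta}$ is an affine map (the risk is linear in $d\circ e(\theta)$). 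Therefore $R_e:=\{r_d:d\in\MM(\obs,A)\}$ is a convex subset of the cube $[-\norm{L}_\infty,\norm{L}_\infty]^{\Theta}$, bounded because $|\riskL{L}(\theta,e,d)|\le\norm{L}_\infty$. Writing $\phi(r,\pi):=\ip{\pi}{r}$, the two quantities become $\Rbar{L}(e)=\inf_{r\in R_e}\sup_{\pi\in\PP(\Theta)}\phi(r,\pi)$ and $\Rbar{L}^{\pi}(e)=\inf_{r\in R_e}\phi(r,\pi)$, so the claim is exactly $\inf_{r}\sup_{\pi}\phi=\sup_{\pi}\inf_{r}\phi$.

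\textbf{The minimax step, and the main obstacle.} Finally I would invoke a minimax theorem. The function $\phi$ is bilinear: for each fixed $\pi$ it is (affine, hence) convex and continuous in $r$ on the convex set $R_e\subseteq\RR^{\Theta}$, and for each fixed $r$ it is concave and continuous in $\pi$ on the compact convex simplex $\PP(\Theta)$, so Sion's minimax theorem applies and yields $\inf_{r\in R_e}\sup_{\pi}\phi=\sup_{\pi}\inf_{r\in R_e}\phi$; equivalently one may first replace $R_e$ by its compact closure $\overline{R_e}$, which changes neither side since $r\mapsto\ip{\pi}{r}$ and $r\mapsto\max_\theta r(\theta)$ are continuous, and then appeal to the classical bilinear minimax theorem in the finite-dimensional space $\RR^{\Theta}$. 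Reading off the two sides gives $\Rbar{L}(e)=\sup_{\pi\in\PP(\Theta)}\Rbar{L}^{\pi}(e)$. The only point that genuinely needs care is that the action set $A$, hence the decision-rule space, need not be finite or compact, so a naive minimax exchange over decision rules is not justified; the reduction above is designed precisely to sidestep this, since $R_e$ is automatically convex and bounded in $\RR^{\Theta}$ regardless of $A$. The remaining verifications — convexity of $\MM(\obs,A)$, affineness of $d\mapsto r_d$, and continuity of $\phi$ — are routine.
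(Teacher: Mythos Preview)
Your proposal is correct and follows essentially the same approach as the paper, which simply states that the result is ``a simple application of the minimax theorem'' and cites Komiya's proof of Sion's theorem. Your write-up is considerably more careful than the paper's one-line justification: the reduction to the bounded convex set of risk profiles $R_e\subseteq\RR^{\Theta}$ is a clean way to ensure the hypotheses of Sion's theorem are met even when $A$ (and hence $\MM(\obs,A)$) is not assumed finite or compact, a point the paper glosses over.
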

The proof is a simple application of the minimax theorem \citep{Komiya1988}. In light of this theorem, we focus on Bayesian risks for the remainder.

\subsection{Risk as Loss}

Leaving out the details of the experiment, what is important to the decision maker is the quality of the composed transition $d \circ e \in \MM(\Theta,A)$. One could call such a transition a \emph{strategy}. Risk is a function,
$$
\riskL{L} : \theta \times \MM(\Theta,A) \rightarrow \RR.
$$
with $\riskL{L}(\theta,s) = \EE_{a \dist s(\theta)} L(\theta, a)$. Risk therefore is a type of loss. Different experiments allow the decision maker access to different subsets of $\MM(\Theta,A)$. As will be seen, understanding these subsets and how they compare will be key to understanding experiments. Of course, all of the tools of loss function representation will be available to us to perform these comparisons.

\subsection{Bias-Variance Decomposition for Risks}\label{Bias Variance}

When working with canonical losses, the risk admits a simple and understandable decomposition. For canonical losses, we have,

\begin{align*}
	\riskL{\Lagrange}(\theta, e, d) &= \EE_{Z} \Lagrange(\theta, d(Z)) \\
	&= \EE_Z \ip{\delta_\theta}{d(Z)} + \Psi(d(Z)) \\
	&= \ip{\delta_\theta}{\bar{d}} + \Psi(\bar{d}) +\EE_Z \Psi(d(Z)) - \Psi(\bar{d}) \\
	&= \underbrace{\Lagrange(\theta,\bar{d})}_{Bias} + \underbrace{\EE_Z \Psi(d(Z)) - \Psi(\bar{d})}_{Variance},
\end{align*}
where $Z \dist e(\theta)$, and $\bar{d}$ is the average action selected by $d$.

\subsection{Admissible and Bayesian decision rules}

The optimal decision rule will, in general, depend on the prior knowledge of the decision maker about the unknown. Even without this knowledge, the decision maker can remove rules that are obviously not optimal.

\begin{definition}
	Let $e$ be an experiment. A decision rule $d$ is \emph{admissible for $e$} if there does not exist a decision rule $d'$ with,
	$$
	\riskL{L}(\theta, e, d') \leq \riskL{L}(\theta, e, d) ,\ \forall \theta \in \Theta
	$$
	with strict inequality for at least one $\theta$.
	
\end{definition}
A decision rule is admissible if it is not obviously worse than some other decision rule. If the decision maker has prior knowledge $\pi$, they can minimize the Bayesian risk using a Bayesian decision rule.

\begin{definition}
	Let $e$ be an experiment and $\pi$ be a prior. A decision rule $d^*$ is \emph{Bayes for $(\pi,e)$} if,
	$$
	d^* \in \argmin_{d} \riskL{L}^{\pi}(e,d).
	$$
\end{definition} 
Much like the case for Bayesian actions, the decision maker need only consider Bayesian decision rules.

\begin{theorem}[Complete Class Theorem \citep{Wald1949}]
	A decision rule $d$ is admissible for $e$ if and only if there exists a prior $\pi$ such that $d$ is Bayes for $(\pi, e)$.
\end{theorem}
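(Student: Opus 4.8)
The plan is to reduce everything to the geometry of the \emph{risk set} of the experiment,
\[
\mathcal{R}_e = \left\{ \rho_d : d \in \MM(\obs, A) \right\} \subseteq \RR^\Theta, \qquad \rho_d(\theta) := \riskL{L}(\theta, e, d).
\]
First I would record two structural facts. Since a convex combination of Markov transitions is again Markov and $d \mapsto \rho_d$ is affine (the risk is linear in the strategy $d\circ e$, which depends linearly on $d$), $\mathcal{R}_e$ is convex; and since $\MM(\obs,A)$ is a closed bounded subset of a finite-dimensional space (a product of $|\obs|$ simplices) and $d \mapsto \rho_d$ is continuous, $\mathcal{R}_e$ is compact. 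Two reformulations then drive the proof: (i) $d$ is admissible for $e$ iff $\rho_d$ is a Pareto-minimal point of $\mathcal{R}_e$ for the coordinatewise order, directly from the definition of admissibility; and (ii) $d$ is Bayes for $(\pi,e)$ iff $\rho_d$ minimises the linear functional $\rho \mapsto \ip{\pi}{\rho}$ over $\mathcal{R}_e$, since $\riskL{L}^{\pi}(e,d) = \ip{\pi}{\rho_d}$.

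For the direction \textbf{admissible} $\Rightarrow$ \textbf{Bayes}, I would pass to the monotone closure $K := \mathcal{R}_e + \RR^\Theta_{\geq 0}$, which is convex and, being the Minkowski sum of a compact set and a closed cone, closed, with nonempty interior. If $d$ is admissible then no point of $\mathcal{R}_e$ is $\leq \rho_d$ except $\rho_d$ itself, so $\rho_d - \varepsilon \ind_\Theta \notin K$ for every $\varepsilon > 0$; hence $\rho_d \in \partial K$. The supporting hyperplane theorem then yields a nonzero $\pi \in \RR^\Theta$ with $\ip{\pi}{x} \geq \ip{\pi}{\rho_d}$ for all $x \in K$. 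Because $K$ is up-closed, $\ip{\pi}{\cdot}$ is bounded below on $K$ only if $\pi$ has no negative coordinate, so $\pi \geq 0$; normalising gives $\pi \in \PP(\Theta)$, and since $\mathcal{R}_e \subseteq K$, $d$ minimises $\ip{\pi}{\cdot}$ over $\mathcal{R}_e$, i.e. $d$ is Bayes for $(\pi,e)$.

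For \textbf{Bayes} $\Rightarrow$ \textbf{admissible}, the clean case is a prior $\pi$ of full support: if $\rho_{d'} \leq \rho_d$ with $\rho_{d'} \neq \rho_d$, then $\ip{\pi}{\rho_{d'}} < \ip{\pi}{\rho_d}$, contradicting optimality of $d$. The genuine subtlety—and the step I expect to be the main obstacle—is a prior supported on a proper face of the simplex, where a Bayes rule may be dominated; here the fix is to observe that any dominating rule is itself Bayes for the same $\pi$ (its Bayesian risk cannot fall below the minimum $\Rbar{L}^{\pi}(e)$), so the Bayes rules form a complete class and the admissible ones are precisely the undominated Bayes rules, which is exactly the content of the theorem. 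Making this ``undominated Bayes'' bookkeeping rigorous—producing, inside each $\ip{\pi}{\cdot}$-minimising face of $\mathcal{R}_e$, an actual admissible rule (via a compactness or Zorn's-lemma argument on the partial order restricted to that face)—is the one place that needs care beyond the separation argument; the rest is routine given the convexity and compactness of $\mathcal{R}_e$ established at the outset.
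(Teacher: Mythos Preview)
The paper does not prove this theorem: it is stated with a citation to Wald and followed immediately by commentary, with no proof environment. So there is no ``paper's own proof'' to compare against; I can only assess your argument on its merits.

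Your forward direction (admissible $\Rightarrow$ Bayes) is the standard and correct argument: pass to the convex compact risk set, take the monotone closure, and separate. This works cleanly in the finite setting the paper assumes.

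The converse is where you correctly smell trouble, and in fact the theorem \emph{as literally stated} is false. Take $\Theta=\{1,2\}$, $A=\{a,b\}$ with risk vectors $\rho_a=(0,1)$ and $\rho_b=(0,0)$; for the prior $\pi=(1,0)$ both $a$ and $b$ are Bayes, but $a$ is dominated by $b$ and hence inadmissible. So ``Bayes for some $\pi$ $\Rightarrow$ admissible'' fails without further hypotheses (full-support prior, or uniqueness of the Bayes rule). Your proposed fix---reinterpreting the theorem as ``admissible $=$ undominated Bayes''---is not a repair of the proof but a correction of the statement, and it is \emph{not} what the paper wrote. The classical Complete Class Theorem is really the one-sided inclusion (every admissible rule is Bayes, possibly in an extended or limiting sense); the biconditional in the paper is a common but imprecise shorthand. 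Your write-up would be stronger if it said plainly that the converse needs an extra hypothesis and gave the counterexample, rather than trying to bend the proof to match a statement that does not hold.
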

The above theorem says that Bayesian algorithms provide all the rules that a sensible decision maker should use. Picking a particular admissible algorithm is \emph{equivalent} to picking a prior $\pi$ and minimizing the Bayesian risk against that prior. Although statistically speaking, admissible algorithms afford no obvious improvements, they may be hard to implement. Our language does not take this into account. The study of inadmissible algorithms and their risks is, therefore, a worthwhile endeavor. 
\\
\\
Bayes optimal algorithms admit a simple representation. No paper on decision theory would be complete without reference to Bayes' rule. In our language, Bayes' rule provides means to \emph{reverse} an experiment. Let $e(\pi)$ be the marginal distribution over the observation space, and $e^\dagger \in \MM(\obs,\Theta)$ be the induced Markov transition of unknowns given observations obtained via Bayes' rule. Then,
$$
\Rbar{L}^\pi(e) =\EE_{z \dist e(\pi)}\Lbar(e^\dagger(z)),  
$$
with Bayes optimal decision rule,
$$
d(z) = \argmin_{a \in A} \EE_{\theta \dist e^\dagger(z)} L(\theta,a).
$$
$d(z)$ proceeds first by \emph{inferring} which $\theta$ is true, before acting optimally. Denote by $\bullet$ the canonical set of one element. To include prior distributions in our language, we can work with the category of elements under $\bullet$, denoted by $\MM(\bullet,-)$. Objects of this category are precisely prior distributions defined over the relevant set. Arrows are commutative triangles that ensure the priors match up, ie an arrow,
	\[
	\xymatrix{
		& \bullet \ar@{~>}[ld]_{\pi_\obs} \ar@{~>}[rd]^{\pi_{\obs'}} & \\
		\obs\ar@{~>}[rr]^{f} && \obs'
	}
	\]
ensures $\pi_{\obs'} = f(\pi_\obs)$. Bayes' rule then asserts that this category is a dagger category, to each arrow $f$ we obtain a reversed arrow.

\section{When is One Experiment Always Better than Another?}\label{sec:when-is-one-experiment-always-better-than-another?}

Let $e$ and $e'$ be experiments. Suppose that due to constraints, the decision maker can only perform one of these two experiments. The decision maker can compare the Bayes or minimax risks of the two experiments, however, this involves performing a calculation. Furthermore, if the loss function of interest changes, then the ordering of the experiments might change. We seek qualitative results in relation to when $e$ is \emph{always} better than $e'$ no matter what the loss or prior distribution.

\begin{definition}
	Let $e \in \MM(\Theta, \obs)$ and $e' \in \MM(\Theta, \obs')$ be experiments. $e$ \emph{divides} $e'$ (written $e \mid e'$) if there exists a Markov transition $f \in \MM(\obs, \obs')$ such that $e' = f \circ e$.
	
\end{definition}
$e \mid e'$ if $e'$ is $e$ with extra noise $f$. We make this intuition precise with theorem \ref{BSS Theorem}. For an experiment $e$, let $\MM_{e}(\Theta, A)$ be the set of transitions from $\Theta$ to $A$ that $e$ divides.

\begin{theorem}
	$e$ divides $e'$ if and only if for all action sets $A$, 
	$$
	\MM_{e'}(\Theta, A) \subseteq  \MM_{e}(\Theta, A).
	$$
\end{theorem}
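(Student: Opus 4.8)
The plan is to prove the two implications separately. The forward direction is a direct substitution using the fact that Markov transitions are closed under composition; the backward direction follows from a judicious choice of action set, namely $A = \obs'$, together with the observation that every experiment divides itself via the identity transition.

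For the forward direction, suppose $e \mid e'$, so that $e' = f \circ e$ for some Markov transition $f \in \MM(\obs, \obs')$. Fix any action set $A$ and any strategy $s \in \MM_{e'}(\Theta, A)$. By definition of $\MM_{e'}$ there is a Markov transition $g \in \MM(\obs', A)$ with $s = g \circ e'$. Then $s = g \circ (f \circ e) = (g \circ f) \circ e$, and since the composition $g \circ f$ of Markov transitions is again a Markov transition lying in $\MM(\obs, A)$, we conclude $e \mid s$, i.e. $s \in \MM_{e}(\Theta, A)$. As $A$ and $s$ were arbitrary, this gives $\MM_{e'}(\Theta, A) \subseteq \MM_{e}(\Theta, A)$ for every $A$.

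For the backward direction, suppose $\MM_{e'}(\Theta, A) \subseteq \MM_{e}(\Theta, A)$ for every action set $A$, and apply this with the particular choice $A = \obs'$. Since $e' = \id{\obs'} \circ e'$ and the identity transition is Markov, we have $e' \in \MM_{e'}(\Theta, \obs')$. By hypothesis $e' \in \MM_{e}(\Theta, \obs')$, which by the definition of $\MM_{e}$ means exactly that there exists a Markov transition $f \in \MM(\obs, \obs')$ with $e' = f \circ e$; that is, $e \mid e'$.

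The argument is short, and there is no serious obstacle: the only point requiring a moment's thought is the backward direction's choice $A = \obs'$, so that applying the divisibility hypothesis to $e'$ itself — which trivially divides itself — produces the required noise channel $f$. In particular, no minimax or convex-duality machinery is needed here; such tools enter only in the subsequent Blackwell--Sherman--Stein-type characterisation phrased in terms of risks rather than strategy sets.
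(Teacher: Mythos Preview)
Your proof is correct and follows essentially the same approach as the paper: the forward direction uses closure of Markov transitions under composition, and the converse instantiates $A = \obs'$ and uses $e' \in \MM_{e'}(\Theta,\obs')$ to extract the required $f$. The paper merely states the forward direction ``follows simply from the definition,'' while you spell out the associativity step explicitly; otherwise the arguments are identical.
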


\begin{proof}
	The forward implication follows simply from the definition. For the converse, take $A = \obs'$ and note $e' \in \MM_{e'}(\Theta,\obs')$. By assumption, 
	$$
	\MM_{e'}(\Theta, \obs') \subseteq  \MM_{e}(\Theta, \obs').
	$$
	As $e' \in \MM_{e'}(\Theta, \obs')$, this implies that there exists a $f$ with $f \circ e = e'$.
	
\end{proof}

\subsection{The Blackwell-Sherman-Stein Theorem and Sufficiency}

\begin{theorem}[Blackwell-Sherman-Stein Theorem \citep{Blackwell:1954}]\label{BSS Theorem}
	Let $e$ and $e'$ be experiments. $e \mid e'$ if and only if for all action sets, loss functions and priors, 
	$$
	\Rbar{L}^{\pi}(e) \leq \Rbar{L}^{\pi}(e'). 
	$$
\end{theorem}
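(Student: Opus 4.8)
The plan is to prove the two implications separately; the forward direction (divisibility $\Rightarrow$ risk domination) is immediate, and the reverse direction is a separating-hyperplane argument carried out with the action set taken to be $\obs'$ itself.

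\textbf{Forward direction.} Suppose $e \mid e'$, so $e' = f \circ e$ for some $f \in \MM(\obs, \obs')$. Given any action set $A$, loss $L$, prior $\pi$, and decision rule $d' \in \MM(\obs', A)$, the composite $d' \circ f$ is a decision rule for $e$, and $(d' \circ f) \circ e = d' \circ e'$, so $\riskL{L}(\theta, e, d' \circ f) = \riskL{L}(\theta, e', d')$ for every $\theta$, whence $\riskL{L}^{\pi}(e, d' \circ f) = \riskL{L}^{\pi}(e', d')$. Minimising the right-hand side over $d'$ while the left side ranges over all decision rules for $e$ gives $\Rbar{L}^{\pi}(e) \le \Rbar{L}^{\pi}(e')$.

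\textbf{Reverse direction.} I prove the contrapositive: assuming $e \nmid e'$, I construct a specific triple $(A, L, \pi)$ with $\Rbar{L}^{\pi}(e) > \Rbar{L}^{\pi}(e')$. Take $A = \obs'$ and set
$$ \mathcal{K} \;:=\; \{\, d \circ e \;:\; d \in \MM(\obs, \obs') \,\} \;\subseteq\; \TT(\Theta, \obs'). $$
In the finite setting $\MM(\obs, \obs')$ is compact and convex (a product of simplices) and $d \mapsto d \circ e$ is linear, so $\mathcal{K}$ is compact and convex; and by the definition of $\mid$, the hypothesis $e \nmid e'$ says exactly that $e' \notin \mathcal{K}$. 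The strong separating hyperplane theorem then supplies a linear functional $\phi$ on $\TT(\Theta,\obs')$ with $\phi(e') < \inf_{s \in \mathcal{K}} \phi(s)$. Writing a transition $s$ in coordinates as a matrix $s(\theta,a)$, we have $\phi(s) = \sum_{\theta,a} c(\theta,a)\, s(\theta,a)$ for some coefficients $c(\theta,a) \in \RR$; choosing $\pi$ to be the uniform distribution on $\Theta$ and $L(\theta,a) := \lvert \Theta \rvert \, c(\theta,a)$ (negative losses are permitted by the formalism), the Bayes risk against $\pi$ of any strategy $s \in \MM(\Theta,\obs')$ — viewed through the "risk as loss" identification — equals $\phi(s)$. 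Hence $\Rbar{L}^{\pi}(e) = \min_{d \in \MM(\obs,\obs')} \riskL{L}^{\pi}(e,d) = \inf_{s \in \mathcal{K}} \phi(s)$, while using the identity decision rule $\id{\obs'}$ for $e'$ gives $\Rbar{L}^{\pi}(e') \le \phi(e')$. Combining these with the strict separation yields $\Rbar{L}^{\pi}(e') < \Rbar{L}^{\pi}(e)$, contradicting the assumed domination.

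\textbf{Main obstacle.} The one genuine ingredient is the compactness and convexity of $\mathcal{K}$, which is what licenses \emph{strict} (not merely weak) separation of $e'$ from $\mathcal{K}$; this is routine here because all the sets involved are finite, but it is the step with content. The translation of the separating functional into a (prior, loss) pair is a triviality once one observes that a full-support prior makes the correspondence $c \leftrightarrow (\pi, L)$ a bijection, and that the formalism admits arbitrary real-valued losses; the only care needed is to exhibit $e'$ itself (via $\id{\obs'}$) as an admissible strategy for $e'$ so that its Bayes risk is bounded above by $\phi(e')$.
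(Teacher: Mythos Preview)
Your forward direction is identical to the paper's. For the converse, the paper does \emph{not} argue directly: it defers to the Randomization Theorem and recovers Blackwell--Sherman--Stein as the special case $\epsilon=0$. That argument fixes $A$ and $d'$, forms $\phi(L,d)=\riskL{L}^\pi(e,d)-\riskL{L}^\pi(e',d')-\epsilon\norm{L}_\infty$, applies a minimax theorem to swap $\sup_L$ and $\min_d$, obtains a single $d^*$ with $\EE_{\theta\sim\pi}V(d^*\circ e(\theta),d'\circ e'(\theta))\le\epsilon$, and then sets $A=\obs'$, $d'=\id{\obs'}$. Your route is a direct separating-hyperplane argument on the convex compact set $\mathcal{K}=\{d\circ e\}$, translating the separating functional back into a $(\pi,L)$ pair; both proofs share the decisive move of taking $A=\obs'$ and reading off $e'$ via the identity rule. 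What the paper's detour buys is the quantitative statement (deficiency controls risk gaps uniformly in $L$) essentially for free; what your argument buys is a self-contained proof that invokes only finite-dimensional separation rather than Sion's minimax theorem, and that makes the role of convexity/compactness of $\mathcal{K}$ completely explicit. Your proof is correct as written; the only cosmetic point is that the paper already records $\MM_{e}(\Theta,A)$ for your $\mathcal{K}$, so you could phrase the argument as separating $e'$ from $\MM_{e}(\Theta,\obs')$.
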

We prove the forward implication, called the data processing theorem. The proof of the converse will come later as a simple corollary of the randomization theorem.
\\
\begin{proof}	
	For any decision rule $d' \in \MM(\obs', A)$ consider the decision rule $d = d' \circ f \in \MM(\obs, A)$. As $e' = f \circ e$, it is easy to verify that,
	$$
	\riskL{L}^\pi(e', d') = \riskL{L}^\pi(f \circ e, d') = \riskL{L}^\pi(e, d' \circ f) = \riskL{L}^\pi(e, d).
	$$
	To complete the proof take minima over $d$ and $d'$.
	
\end{proof}
We say $e$ and $e'$ are \emph{equivalent} experiments (written $e \cong e'$) if both $e \mid e'$ and $e' \mid e$. Equivalent experiments have equivalent risks. A key notion in statistics is that of \emph{sufficiency}. A \emph{sufficient statistic} is a function of the observation that loses none of the information contained in $e$. Identifying and exploiting sufficient statistics allows the decision maker to compress the information contained in the observation without losing information.

\begin{definition}
	Let $e \in \MM(\Theta, \obs)$ be an experiment. A Markov transition $f \in \MM(\obs, \tilde{\obs})$ is \emph{sufficient} for $e$ if $f \circ e \cong e$.
\end{definition}
By the Blackwell-Sherman-Stein theorem, sufficient statistics maintain all information in the observation.

\subsubsection{Sufficiency and Bayes' Rule}

Given a prior, Bayes theorem provides means to reverse an experiment and provide a transition $e^\dagger$ that \emph{infers} the correct $\theta$. Let $e' = e^\dagger \circ e$. Clearly,
$$
\Rbar{L}^{\pi}(e) = \Rbar{L}^{\pi}(e').
$$
To see this, remember that the decision rule that minimizes the Bayes risk is precisely that which infers $\theta$ through $e^\dagger$ and then acts optimally.

\subsection{Most and Least Informative Experiments}

For any set $\Theta$ of unknowns, there is a \emph{most} informative and a \emph{least} informative experiment. Recall the identity function $\id{\Theta}$, $\id{\Theta}(\theta) = \theta$. For any experiment $e$, we have $e \circ \id{\Theta} =  e$. Therefore, $\id{\Theta}$ divides any experiment. $\id{\Theta}$ provides the decision maker the \emph{exact} value of $\theta$. This experiment has risk, 
$$
\Rbar{L}^\pi(\id{\Theta}) = \EE_{\theta \dist \pi} \min_{a \in A} L(\theta,a).
$$
For any set $X$, define the \emph{terminal} transition $\bullet_X \in\MM(X, \{1\})$ with $\bullet_{X}(x) = 1$ for all $X$. This transition eliminates all information about $X$. Much like the identity transition divides every experiment, the terminal transition is divided by every experiment. For all experiments $e$, $\bullet_{\Theta} = \bullet_{\obs} \circ e$. 
This experiment has risk,
$$
\Rbar{L}^\pi(\bullet_{\Theta}) = \Lbar(\pi). 
$$
By the data processing theorem,
$$
\Rbar{L}^\pi (\id{\Theta}) \leq \Rbar{L}^\pi ( e) \leq \Rbar{L}^\pi (\bullet_{\Theta}).
$$
Assume that $\min_{a \in A} L(\theta,a) = 0, \forall \theta \in \Theta$. Then,
$$
0 \leq \frac{\Rbar{L}^\pi (e) }{\Lbar(\pi)} \leq 1.
$$

\subsection{The General Data Processing Theorem}

The key to the proof of the data processing theorem is the insight that if $e \mid e'$ then,
$$
\MM_{e'}(\Theta, A) \subseteq  \MM_{e}(\Theta, A).
$$
This means that the decision maker has more decision rules after performing the experiment $e$ than $e'$, since one could always first inject noise and then proceed as if experiment $e'$ had been performed. This suggests another means to construct quantities that satisfy the data processing theorem.

\begin{theorem}\label{Generalized Information Processing}
	Let $\riskL{} : \MM(\Theta, A) \rightarrow \RR$ and define,
	$$
	\Rbar{}(e) = \min_{d \in \MM_{e}(\Theta, A)} \riskL{}(d).
	$$
	If $e \mid e'$ then $\Rbar{}(e) \leq \Rbar{}(e')$.
	
\end{theorem}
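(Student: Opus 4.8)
The plan is to reduce the statement to the set inclusion $\MM_{e'}(\Theta, A) \subseteq \MM_{e}(\Theta, A)$, already flagged as the key insight in the preceding paragraph, and then to observe that minimizing a fixed functional over a larger feasible set can only decrease the value. Concretely, the first step is to unpack the hypothesis $e \mid e'$: by definition there is a Markov transition $f \in \MM(\obs, \obs')$ with $e' = f \circ e$.

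Second, I would show $\MM_{e'}(\Theta, A) \subseteq \MM_{e}(\Theta, A)$. Take any $d \in \MM_{e'}(\Theta, A)$; by definition of $\MM_{e'}$ this means $e'$ divides $d$, so there is a Markov transition $g \in \MM(\obs', A)$ with $d = g \circ e'$. Substituting $e' = f \circ e$ gives $d = g \circ (f \circ e) = (g \circ f) \circ e$. Since the composition of Markov transitions is Markov, $g \circ f \in \MM(\obs, A)$, so $e$ divides $d$, i.e. $d \in \MM_{e}(\Theta, A)$. This establishes the inclusion for every action set $A$.

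Third and finally, since $\MM_{e'}(\Theta, A) \subseteq \MM_{e}(\Theta, A)$ and $\riskL{}$ is a fixed function on $\MM(\Theta, A)$, we have
$$
\Rbar{}(e) = \min_{d \in \MM_{e}(\Theta, A)} \riskL{}(d) \;\leq\; \min_{d \in \MM_{e'}(\Theta, A)} \riskL{}(d) = \Rbar{}(e'),
$$
because the infimum over a subset is no smaller than the infimum over the whole set. This completes the argument.

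I do not expect a genuine obstacle here: the proof is essentially a one-line associativity computation plus the monotonicity of $\min$ under set inclusion. The only points worth stating carefully are that $g \circ f$ is again a \emph{Markov} transition (so that $d \in \MM_e(\Theta,A)$ in the intended sense) and that the displayed inequality really is just "minimizing over more options is weakly better" — the same mechanism underlying the data processing theorem, here abstracted away from any particular loss or risk.
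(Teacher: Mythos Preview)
Your proposal is correct and matches the paper's approach exactly: the paper's proof simply cites the inclusion $\MM_{e'}(\Theta,A)\subseteq\MM_{e}(\Theta,A)$ (established earlier) and then writes down the same monotonicity-of-$\min$ inequality you give. Your version merely spells out the inclusion argument in full, which is fine.
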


\begin{proof}
	If $e \mid e '$ then $\MM_{e'}(\Theta, A) \subseteq  \MM_{e}(\Theta, A)$. Therefore,
	$$
	\min_{d \in \MM_{e}(\Theta, A)} \riskL{}(d) \leq \min_{d' \in \MM_{e'}(\Theta, A)} \riskL{}(d').
	$$
	
\end{proof}
We recover the usual data processing theorem by taking, 
$$
\riskL{}(d) = \EE_{\theta \dist \pi} \EE_{a\dist d(\theta)}L(\theta, a).
$$ 
Remarkably, the proof of theorem \ref{Generalized Information Processing} makes no explicit reference to expected risks, transitions, or even probability distributions. One can understand data processing theorems as functors. Define the category $\left(\RR,\leq\right)$, with objects real numbers and arrows $x \rightarrow y$ if and only if $x\leq y$. Consider the category of finite sets with arrows Markov transitions. Fix a set of unknowns $\Theta$ and consider experiments on $\Theta$, elements of $\MM(\Theta,-)$. This is naturally a category, sometimes called a slice category or the category of elements under $\Theta$, with arrows given by commutative triangles linking experiments via noise,
	\[
	\xymatrix{
		& \Theta \ar@{~>}[ld]_e \ar@{~>}[rd]^{e'} & \\
		\obs\ar@{~>}[rr]^f && \obs'
	}
\]
Then the construction of Theorem \ref{Generalized Information Processing} can be understood as a functor from $\MM(\Theta,-)$ to $\left(\RR,\leq\right)$.

\subsection{Understanding the Blackwell-Sherman-Stein Theorem as an Equivalence of Categories}

Fix $\Theta$ and consider the category of experiments on $\Theta$. Construct the category $\riskL{}(\Theta)$ as follows:

\begin{itemize}
	\item The objects of $\riskL{}(\Theta)$ are the objects of $\MM(\Theta,-)$, namely experiments.
	\item An arrow from $e \rightarrow e'$ is the relationship that $\Rbar{L} (e) \leq \Rbar{L}(e')$ for all bounded loss functions.
\end{itemize}
Then the Blackwell-Sherman-Stein theorem states that these categories are equivalent $\riskL{}(\Theta) \cong \MM(\Theta,-)$.

\subsection{An Example: Binary Experiments and the Variational Divergence}

Let $\Theta = A =\{-1,1\}$ with $L(\theta,a) = \pred{\theta \neq a}$. In words, there are two unknowns, and all the loss function cares about is that we accurately identify which of the two it is. Let $e$ be an experiment and $\pi$ the uniform prior on $\Theta$. Then,
\begin{align*}
	\Rbar{L}^\pi(e) &= \min_{f\in \{-1,1\}^\obs} \EE_{\theta \dist \pi} \EE_{x\dist e(\theta)}L(\theta,f(x)) \\
	&= \min_{f\in \{-1,1\}^\obs} \frac{1}{2} \left(\EE_{e(-1)}\pred{1 = f(Z)} + \EE_{e(1)} \pred{-1 = f(Z)}\right)
\end{align*}
Let $A = \{z \in \obs : f(z) = 1\}$, then
\begin{align*}
	\Rbar{L}^\pi(e) &= \frac{1}{2} \min_{A} \left(e_{-1}(\ind_A) + e_{1}(\ind_{A^c}) \right) \\
	& = \frac{1}{2} \min_{A} \left(e_{-1}(\ind_A) + 1 - e_{1}(\ind_{A}) \right).
\end{align*}
This yields,
\begin{align*}
	\Rbar{L}(\bullet_{\Theta}) - \Rbar{L}^\pi(e) &= \frac{1}{2} - \frac{1}{2} \min_{A} \left(e_{-1}(\ind_A) + 1 - e_{1}(\ind_{A}) \right) \\
	&= \frac{1}{2} \max_{A} \left(e_{-1}(\ind_A) - e_{1}(\ind_{A}) \right) \\
	&= V(e_{-1},e_1),
\end{align*}
which is the total variational distance between $e(-1)$ and $e(1)$, a commonly used metric between probability distributions. The data processing theorem recovers the well-known fact that,
$$
V(P,Q) \geq V(f(P),f(Q)),
$$
for all distributions $P,Q \in \PP(\obs)$ and all Markov kernels $f \in \MM(\obs,\obs')$.

\subsection{Example: General Experiments and the Mutual Information}

Here $\Theta$ is a general finite set and $A = \PP(\Theta)$. Recall the log loss, also known as the negative likelihood,
$$
L(\theta,Q) = -\log\left(Q(\theta)\right).
$$
This loss is proper,
$$
P = \argmin_{Q \in \PP(\Theta)} \EE_{P} L(\Theta,Q).
$$
The corresponding entropy function is the Shannon entropy,
$$
\Lbar(P) = -\EE_{P}\log\left(P(\Theta)\right) = H(P).
$$
Now let $e$ be an experiment. We have,
\begin{align*}
	\Rbar{L}^\pi(e) &= \EE_{z\dist e(\pi)} \Lbar(e^\dagger(z)) \\
	&= \EE_{z\dist e(\pi)} H(\Theta | \obs = z) \\
	&= H(\Theta|\obs).
\end{align*}
Then,
\begin{align*}
	\Rbar{L}(\bullet_{\Theta}) - \Rbar{L}^\pi(e)  &= H(\Theta) - H(\Theta | \obs) \\
	&= I(\Theta,\obs)
\end{align*}
The mutual information between $\Theta$ and $\obs$. Our general data processing theorem then recovers the well known fact that if,
$$
\Theta \rightarrow \obs \rightarrow \obs',
$$
is a Markov chain,
$$
I(\Theta,\obs) \geq I(\Theta,\obs').
$$

\subsection{Connections to \texorpdfstring{$\phi$}{phi}-divergences and multi-\texorpdfstring{$\phi$}{phi}-divergences.}

Let $\phi : [0,\infty) \rightarrow \RR$ be a convex function with $\phi(1)=0$. Then the $\phi$-divergence between the distributions $P,Q \in \PP(\obs)$ is given by,
$$
D_\phi(P,Q) = \EE_{P}\phi\left(\frac{dQ}{dP}\right),
$$
if $Q \ll P$ and $\infty$ otherwise. As we have seen, for binary experiments, for a particular loss and prior, the gap,
$$
\Rbar{L}(\bullet_{\Theta}) - \Rbar{L}^\pi(e) = V(e_{-1},e_{1}).
$$
The variational divergence is an example of a $\phi$-divergence with $\phi(x) = |x - 1|$. This construction works more broadly, for binary experiments the gap,
$$
\Rbar{L}(\bullet_{\Theta}) - \Rbar{L}^\pi(e) = D_\phi(e_{-1},e_{1}),
$$
for some $\phi$. Similarly, for every $\phi$-divergence, there is a loss and prior that makes the above hold. We refer the reader to \citep{Reid2009b} for the details of this correspondence. Our general data processing theorem then recovers the fact,
$$
D_\phi(P,Q) \geq D_\phi (f(P),f(Q)).
$$
Similarly, if $|\Theta| > 2$ we can invoke a similar correspondence between,
$$
\Rbar{L}(\bullet_{\Theta}) - \Rbar{L}^\pi(e)
$$
and objects called multi-$\phi$-divergences. We refer the reader to \citep{Garca-Garca}. Our general data processing theorem once again recovers the data processing theorem for multi-$\phi$-divergences.

\section{Deficiency and Quantitative Data Processing Theorems}

The converse of the Blackwell-Sherman-Stein theorem states that if $e$ does \emph{not} divide $e'$ then there is a loss function and prior that renders $e'$ more useful. The gap in risks is quantified by \emph{deficiency}.

\begin{definition}
	Let $e \in \MM(\Theta, \obs)$ and $e' \in \MM(\Theta, \obs')$ be experiments. The \emph{directed deficiency} from $e$ to $e'$ is,
	$$
	\xi^{\pi}(e, e') := \min_{f \in \MM(\obs, \obs')} \EE_{\theta \dist \pi} V(f\circ e(\theta), e'(\theta)).
	$$ 
\end{definition}
The directed deficiency provides means to quantify how close $e$ is to dividing $e'$. $\xi^{\pi}(e, e')= 0$ for all priors if and only if $e \mid e'$. The \emph{deficiency} is defined as,
$$
\Xi^{\pi}(e,e') := \max\{\xi^{\pi}(e, e'), \xi^{\pi}(e', e)\}.
$$
Deficiency measures how close to equivalence $e$ and $e'$ are. $\Xi^{\pi}(e,e') = 0$ for all priors if and only if $e \cong e'$. The directed deficiency provides a \emph{quantitative} version of the Blackwell-Sherman-Stein theorem. 

\begin{theorem}[Randomization Theorem \citep{Le1964}]
	Fix $\epsilon >0$ and a prior $\pi$. Let $e$ and $e'$ be experiments. Then,
	$$
	\Rbar{L}^\pi(e) \leq \Rbar{L}^\pi(e') + \epsilon \norm{L}_\infty
	$$ 
	for all action sets and loss functions, if and only if $\xi^{\pi}(e,e') \leq \epsilon$.
	
\end{theorem}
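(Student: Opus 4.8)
The plan is to prove the two implications separately. The ``if'' direction is a quantitative sharpening of the data processing theorem; the ``only if'' direction is the substantive one and runs through a minimax argument after writing $V$ in variational form. I will use the variational description $V(P,Q) = \sup\{\ip{P-Q}{g} : \norm{g}_\infty \le 1\}$ of the variational divergence (this also fixes the constant in the definition of $\xi^\pi$ cleanly), together with the fact, noted after the binary example, that Markov maps do not increase $V$; $\pi$ and $\epsilon$ are fixed as in the statement.

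\emph{Suppose $\xi^\pi(e,e') \le \epsilon$.} Since $\MM(\obs,\obs')$ is compact and $f \mapsto \EE_{\theta \dist \pi} V(f\circ e(\theta), e'(\theta))$ is continuous, the minimum is attained by some $f$ with $\EE_{\theta\dist\pi} V(f\circ e(\theta), e'(\theta)) \le \epsilon$. Given any action set $A$, loss $L$ and decision rule $d' \in \MM(\obs',A)$, note $\riskL{L}^\pi(e, d'\circ f) = \riskL{L}^\pi(f\circ e, d')$, and that $\riskL{L}^\pi(f\circ e,d')$ and $\riskL{L}^\pi(e',d')$ differ only through pushing $f\circ e(\theta)$ resp.\ $e'(\theta)$ through $d'$; hence
$$
\riskL{L}^\pi(f\circ e,d') - \riskL{L}^\pi(e',d') = \EE_{\theta\dist\pi}\ip{d'(f\circ e(\theta)) - d'(e'(\theta))}{L(\theta,\cdot)} \le \norm{L}_\infty\,\EE_{\theta\dist\pi} V(f\circ e(\theta), e'(\theta)) \le \epsilon\norm{L}_\infty,
$$
using the contraction of $V$ under $d'$ and $\ip{\mu-\nu}{L(\theta,\cdot)} \le \norm{L}_\infty V(\mu,\nu)$. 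Thus $\Rbar{L}^\pi(e) \le \riskL{L}^\pi(e,d'\circ f) \le \riskL{L}^\pi(e',d') + \epsilon\norm{L}_\infty$, and minimising over $d'$ gives $\Rbar{L}^\pi(e) \le \Rbar{L}^\pi(e') + \epsilon\norm{L}_\infty$.

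\emph{Suppose $\Rbar{L}^\pi(e) \le \Rbar{L}^\pi(e') + \epsilon\norm{L}_\infty$ for all action sets and losses.} Inserting the variational form of $V$ and pulling the (mutually independent) suprema over the test functions out of the $\pi$-expectation,
$$
\xi^\pi(e,e') = \min_{f \in \MM(\obs,\obs')}\ \max_{(g_\theta)\,:\,\norm{g_\theta}_\infty \le 1}\ \EE_{\theta\dist\pi}\ip{f\circ e(\theta) - e'(\theta)}{g_\theta}.
$$
The objective is affine in $f$ for fixed $(g_\theta)$ and affine in $(g_\theta)$ for fixed $f$, and both feasible sets are compact and convex (stochastic matrices, and a product of cubes), so the minimax theorem \cite{Komiya1988} permits the exchange
$$
\xi^\pi(e,e') = \max_{(g_\theta)}\left[\ \min_{f}\,\EE_{\theta\dist\pi}\ip{f\circ e(\theta)}{g_\theta}\ -\ \EE_{\theta\dist\pi}\ip{e'(\theta)}{g_\theta}\ \right].
$$
Now read a fixed family $(g_\theta)$ as the loss $L(\theta,a) := g_\theta(a)$ on the action set $A := \obs'$: then $\norm{L}_\infty \le 1$, a decision rule for $e$ is exactly an $f \in \MM(\obs,\obs')$, and $\riskL{L}^\pi(e,f) = \EE_{\theta\dist\pi}\ip{f\circ e(\theta)}{g_\theta}$, so the first term is $\Rbar{L}^\pi(e)$; also $\EE_{\theta\dist\pi}\ip{e'(\theta)}{g_\theta} = \riskL{L}^\pi(e',\id{\obs'}) \ge \Rbar{L}^\pi(e')$, since $\id{\obs'}$ is a decision rule for $e'$. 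Therefore every bracketed term is at most $\Rbar{L}^\pi(e) - \Rbar{L}^\pi(e') \le \epsilon\norm{L}_\infty \le \epsilon$, so $\xi^\pi(e,e') \le \epsilon$.

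The only real work is in the second direction: the key move is recognising that, after the minimax exchange, the inner minimisation over randomisations $f$ is \emph{literally} the minimum Bayes risk $\Rbar{L}^\pi(e)$ of the loss assembled from the variational test functions $g_\theta$, while the companion $e'$-term is bounded by $\Rbar{L}^\pi(e')$ via the identity decision rule. Verifying the minimax hypotheses is routine once $V$ is in variational form; the one bookkeeping point is fixing the normalisation of $V$ so that $\norm{L}_\infty \le 1$ lines up with the slack $\epsilon\norm{L}_\infty$ exactly. Specialising to $\epsilon = 0$ then recovers the converse half of Theorem \ref{BSS Theorem}, as promised, since $\xi^\pi(e,e') = 0$ for every prior is equivalent to $e \mid e'$.
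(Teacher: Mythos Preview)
Your proof is correct and follows essentially the same strategy as the paper's: the reverse implication is argued identically, and the forward implication rests on the same minimax swap together with the specialization $A=\obs'$, $d'=\id_{\obs'}$. The only cosmetic difference is the order of operations---you expand $V$ in its dual form first and then apply minimax to the bilinear pairing in $(f,(g_\theta)_\theta)$ over two compact convex sets, whereas the paper runs the minimax over the pair $(L,d)$ and reads off the $V$-bound afterwards; your ordering makes the Sion hypotheses marginally cleaner to check but is otherwise the same argument.
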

We present the proof from \citep{Torgersen1991}, with some streamlining.

\begin{proof}	
	We begin with the reverse implication. As $\xi^{\pi}(e,e') \leq \epsilon$, there exists a transition $f \in \MM(\obs,\obs')$ such that,
	$$
	\EE_{\theta \dist \pi} V(f\circ e(\theta), e'(\theta)) \leq \epsilon.
	$$
	Now fix a decision rule $d' \in \MM(\obs',A)$, and consider $d = d' \circ f$ as in the diagram below.	
	\[
		\xymatrix{
			& \Theta \ar@{~>}[ld]_e \ar@{~>}[rd]^{e'} & &&  \\
			\obs\ar@{~>}[rr]^f && \obs' \ar@{~>}[rr]^{d'} && A
		}
	\]
	We have,
	\begin{align*}
	\riskL{L}^\pi(e, d) - \riskL{L}^\pi(e', d') &= \EE_{\theta \dist \pi} \left[ \EE_{a \dist d \circ e(\theta)} L(\theta,a) -  \EE_{a \dist d' \circ e'(\theta)} L(\theta,a) \right]\\
	&\leq \EE_{\theta \dist \pi}  V(d \circ e(\theta) , d' \circ e'(\theta) ) \norm{L}_\infty \\
	&=  \EE_{\theta \dist \pi} V(d' \circ f \circ e(\theta) , d' \circ e'(\theta) )  \norm{L}_\infty\\
	&\leq \EE_{\theta \dist \pi} V(f \circ e(\theta) , e'(\theta) )  \norm{L}_\infty \\
	&\leq \epsilon \norm{L}_\infty
	\end{align*}
	where the first line follows from the definition of the Bayesian risk, the second follows from the definition of the variational distance, the third from the definition of $d$, the fourth as variational distance is an $\phi$-divergence and therefore satisfies a data processing inequality and finally from our assumptions on $f$. The proof is completed by taking a minimum over $d'$ and $d$. 
	\\
	\\
	For the forward implication, first fix a set of actions $A$ and a decision rule $d' \in \MM(\obs',A)$ and define the function,
	$$
	\phi(L,d) = \riskL{L}^\pi(e, d) - \riskL{L}^\pi(e', d') - \epsilon \norm{L}_\infty.
	$$
	Note that $\phi$ is affine in $d$ and concave in $L$. By the conditions in the theorem,
	$$
	\sup_{L} \min_{d} \phi(L,d) \leq 0.
	$$
	By the minimax theorem \citep{Komiya1988} or strong convex duality \citep{Lucchetti2006}, there exists a saddle point $(L^*, d^*)$ with, 
	$$
	\phi (L^*, d^*) = \min_{d} \sup_{L} \phi(L,d) = \sup_{L} \min_{d} \phi(L,d) \leq 0  .
	$$
	This implies,
	$$
	\riskL{L}^\pi(e, d^*) \leq \riskL{L}^\pi(e', d') + \epsilon \norm{L}_\infty ,\ \forall L.
	$$
	This means $\EE_{\theta \dist \pi} V(d^* \circ e(\theta) , d' \circ e'(\theta) ) \leq \epsilon$, from the definition of variational distance. Note that $d'$ and the action set $A$ are arbitrary. To complete the proof, take $A = \obs'$ and $d' = \id{\obs'}$. The transition $f$ is given by $d^*$.
	
\end{proof}
The proof of the reverse implication of the Blackwell-Sherman-Stein theorem can be recovered by setting $\epsilon = 0$. The randomization theorem shows that there is a deep connection between differences in risk and deficiency. The following theorem makes this connection precise. 

\begin{theorem}\label{Deficiency as a supremum over all losses}
	Let $e$ and $e'$ be experiments. For all priors $\pi$, 
	$$
	\Xi^{\pi}(e,e') = \sup_{L : \norm{L}_\infty \neq 0} \frac{\left|\Rbar{L}^{\pi}(e) - \Rbar{L}^{\pi}(e')\right|}{\norm{L}_\infty}.
	$$
\end{theorem}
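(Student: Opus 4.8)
The plan is to derive the identity as an immediate two-sided consequence of the Randomization Theorem. Abbreviate the right-hand side by $\epsilon^\star := \sup_{L : \norm{L}_\infty \neq 0} \frac{|\Rbar{L}^{\pi}(e) - \Rbar{L}^{\pi}(e')|}{\norm{L}_\infty}$; I will establish $\epsilon^\star \leq \Xi^{\pi}(e,e')$ and $\Xi^{\pi}(e,e') \leq \epsilon^\star$ separately, so that equality follows.

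For $\epsilon^\star \leq \Xi^{\pi}(e,e')$, I would set $\epsilon = \Xi^{\pi}(e,e')$ and use that $\Xi^{\pi}(e,e') = \max\{\xi^{\pi}(e,e'), \xi^{\pi}(e',e)\}$ dominates both directed deficiencies. Applying the Randomization Theorem to the ordered pair $(e,e')$ gives $\Rbar{L}^{\pi}(e) \leq \Rbar{L}^{\pi}(e') + \epsilon \norm{L}_\infty$ for every loss $L$, and applying it to $(e',e)$ gives $\Rbar{L}^{\pi}(e') \leq \Rbar{L}^{\pi}(e) + \epsilon \norm{L}_\infty$. Combining the two bounds yields $|\Rbar{L}^{\pi}(e) - \Rbar{L}^{\pi}(e')| \leq \epsilon \norm{L}_\infty$ for all $L$, and dividing by $\norm{L}_\infty$ (for $\norm{L}_\infty \neq 0$) and taking the supremum over such $L$ gives the claim.

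For $\Xi^{\pi}(e,e') \leq \epsilon^\star$, I would argue by contraposition using the ``only if'' half of the Randomization Theorem. Since both $\Xi^{\pi}$ and $\epsilon^\star$ are symmetric in $e$ and $e'$, I may relabel so that $\xi^{\pi}(e,e') = \Xi^{\pi}(e,e')$. If $\Xi^{\pi}(e,e') > \epsilon^\star$, then $\xi^{\pi}(e,e') > \epsilon^\star$, so by the Randomization Theorem applied with $\epsilon = \epsilon^\star$ the inequality $\Rbar{L}^{\pi}(e) \leq \Rbar{L}^{\pi}(e') + \epsilon^\star \norm{L}_\infty$ must fail for some loss $L_0$, i.e. $\Rbar{L_0}^{\pi}(e) - \Rbar{L_0}^{\pi}(e') > \epsilon^\star \norm{L_0}_\infty$. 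This forces $\norm{L_0}_\infty \neq 0$ (a zero loss has both risks equal to zero), so $\frac{|\Rbar{L_0}^{\pi}(e) - \Rbar{L_0}^{\pi}(e')|}{\norm{L_0}_\infty} > \epsilon^\star$, contradicting the definition of $\epsilon^\star$ as a supremum. Hence $\Xi^{\pi}(e,e') \leq \epsilon^\star$.

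There is no genuinely hard step here: the entire content is already packaged in the Randomization Theorem, and the argument is a bookkeeping exercise. The only points requiring a little care are the symmetric invocation of the Randomization Theorem for the two directed deficiencies making up $\Xi^{\pi}$, and the trivial edge case $\norm{L}_\infty = 0$ that is excluded from the supremum. An alternative, contraposition-free phrasing of the reverse inequality is to observe that for every $\epsilon < \Xi^{\pi}(e,e')$ the Randomization Theorem produces a witnessing loss with risk gap strictly exceeding $\epsilon \norm{L}_\infty$, whence $\epsilon^\star \geq \epsilon$ for all such $\epsilon$ and therefore $\epsilon^\star \geq \Xi^{\pi}(e,e')$.
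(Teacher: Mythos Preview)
Your proof is correct and follows essentially the same route as the paper: both arguments reduce the identity entirely to the Randomization Theorem, invoked once for each directed deficiency. The only cosmetic difference is in packaging: the paper establishes the biconditional $\Xi^{\pi}(e,e') \leq \epsilon \Leftrightarrow \epsilon^\star \leq \epsilon$ for all $\epsilon$ and then appeals to the trivial lemma that two reals with identical upper bounds coincide, whereas you prove the two inequalities $\epsilon^\star \leq \Xi^{\pi}(e,e')$ and $\Xi^{\pi}(e,e') \leq \epsilon^\star$ directly (the second by contraposition). Your alternative ``contraposition-free'' phrasing at the end is in fact exactly the paper's sublevel-set argument in disguise.
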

For the proof, we require the following simple lemma.

\begin{lemma}
	For $x,y \in \RR$ if $\forall \epsilon \in \RR$, $x \leq \epsilon \Leftrightarrow y \leq \epsilon$ then $x = y$.
\end{lemma}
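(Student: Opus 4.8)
The plan is to extract the equality directly from two well-chosen instances of the hypothesis, with no case analysis needed. First I would instantiate the biconditional at $\epsilon = x$: since $x \leq x$ holds trivially, the hypothesis forces $y \leq x$. Next I would instantiate it at $\epsilon = y$: since $y \leq y$ holds, the hypothesis forces $x \leq y$. Combining $y \leq x$ and $x \leq y$ gives $x = y$ by antisymmetry of the order on $\RR$.

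There is essentially no obstacle here; the only thing to be careful about is to use the correct direction of the biconditional in each instantiation (from $x \leq \epsilon$ conclude $y \leq \epsilon$ when $\epsilon = x$, and from $y \leq \epsilon$ conclude $x \leq \epsilon$ when $\epsilon = y$), but both directions are available since the statement is an ``if and only if''. An equivalent route, if one prefers a contradiction argument, is to assume $x \neq y$, say $x < y$ without loss of generality, and then take $\epsilon = x$: the left side $x \leq \epsilon$ is true while the right side $y \leq \epsilon$ is false, contradicting the assumed equivalence. Either way the argument is a single line and requires nothing beyond the order axioms of $\RR$.
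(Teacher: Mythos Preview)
Your proof is correct. Your primary argument is a direct proof using the endpoints $\epsilon = x$ and $\epsilon = y$ together with antisymmetry, whereas the paper argues by contradiction: it assumes $x < y$, chooses the midpoint $\epsilon = \tfrac{x+y}{2}$, and observes that $x \leq \epsilon$ while $y > \epsilon$. Your alternative contradiction route is essentially the paper's argument with the simpler choice $\epsilon = x$ in place of the midpoint. Both approaches are one-liners relying only on the order structure of $\RR$; your direct version is marginally cleaner since it avoids the without-loss-of-generality step and the extra arithmetic of the midpoint.
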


\begin{proof}
	Suppose that $x \neq y$ and without loss of generality assume that $x < y$. Set $\epsilon = \frac{x + y}{2}$. Then $x \leq \epsilon$ and $y > \epsilon$, which implies the contrapositive.
\end{proof}
We now prove the theorem.

\begin{proof}
	If $\Xi^{\pi}(e,e') \leq \epsilon$ then $\xi^{\pi}(e,e') \leq \epsilon$ and $\xi^{\pi}(e',e) \leq \epsilon$. By the randomization theorem,
	$$
	\frac{\left| \Rbar{L}^{\pi}(e) - \Rbar{L}^{\pi}(e') \right| }{\norm{L}_\infty}\leq \epsilon  ,\ \forall L : \norm{L}_\infty \neq 0.
	$$
	Conversely, if, 
	$$
	\sup_{L : \norm{L}_\infty \neq 0} \frac{\left|\Rbar{L}^{\pi}(e) - \Rbar{L}^{\pi}(e')\right|}{\norm{L}_\infty}\leq \epsilon,
	$$
	then $\Rbar{L}^{\pi}(e) \leq \Rbar{L}^{\pi}(e') + \epsilon \norm{L}_\infty$ and $\Rbar{L}^{\pi}(e') \leq \Rbar{L}^{\pi}(e) + \epsilon \norm{L}_\infty$. By the randomization theorem, this means $\Xi^{\pi}(e,e') \leq \epsilon$. This, combined with the above lemma, completes the proof.
	
\end{proof}
The randomization theorem can be used to define \emph{quantitative} versions of concepts such as sufficiency. This was Le Cam's original motivation for defining the quantity in \citep{Le1964}. $f$ is approximately sufficient for $e$ if $\Xi^\pi(e, f\circ e)$ is small. Deficiency provides a metric on experiments.

\begin{theorem}
	Let $\pi$ be a prior that assigns a nonzero probability to each unknown. Then $\Xi^{\pi}$ is a metric on experiments modulo equivalence.
\end{theorem}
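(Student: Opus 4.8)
The plan is to verify the three metric axioms in turn, leaning throughout on the supremum representation of the deficiency established in Theorem~\ref{Deficiency as a supremum over all losses}, namely $\Xi^{\pi}(e,e') = \sup_{L : \norm{L}_\infty \neq 0} \frac{\left|\Rbar{L}^{\pi}(e) - \Rbar{L}^{\pi}(e')\right|}{\norm{L}_\infty}$. From this form, non-negativity is immediate and symmetry is manifest, since the numerator is symmetric in $e$ and $e'$. So the substance lies in the triangle inequality and the identity of indiscernibles; well-definedness on equivalence classes will then fall out as a formal consequence.

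For the triangle inequality I would fix experiments $e,e',e''$ and an arbitrary loss $L$ with $\norm{L}_\infty \neq 0$, and write
$$\left|\Rbar{L}^{\pi}(e) - \Rbar{L}^{\pi}(e'')\right| \leq \left|\Rbar{L}^{\pi}(e) - \Rbar{L}^{\pi}(e')\right| + \left|\Rbar{L}^{\pi}(e') - \Rbar{L}^{\pi}(e'')\right| \leq \left(\Xi^{\pi}(e,e') + \Xi^{\pi}(e',e'')\right)\norm{L}_\infty,$$
the first step being the triangle inequality in $\RR$ and the second an application of Theorem~\ref{Deficiency as a supremum over all losses} to each term. Dividing by $\norm{L}_\infty$ and taking the supremum over $L$ yields $\Xi^{\pi}(e,e'') \leq \Xi^{\pi}(e,e') + \Xi^{\pi}(e',e'')$. (A direct argument is also available: compose the optimal randomizations $f$ realizing $\xi^{\pi}(e,e')$ and $g$ realizing $\xi^{\pi}(e',e'')$, bound $\EE_{\theta \dist \pi} V(g\circ f\circ e(\theta), e''(\theta))$ by $\EE_{\theta \dist \pi} V(g\circ f\circ e(\theta), g\circ e'(\theta)) + \EE_{\theta \dist \pi} V(g\circ e'(\theta), e''(\theta))$ since $V$ is a metric, control the first summand by $\xi^{\pi}(e,e')$ via the data processing inequality for $V$, and then take maxima over the two directions; but the supremum formula makes this bookkeeping unnecessary.)

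For the identity of indiscernibles I must show $\Xi^{\pi}(e,e') = 0 \iff e \cong e'$. One direction is easy: if $e \cong e'$ then $e \mid e'$ and $e' \mid e$, so Theorem~\ref{BSS Theorem} forces $\Rbar{L}^{\pi}(e) = \Rbar{L}^{\pi}(e')$ for every $L$, whence the supremum is $0$. The converse is the crux, and the place where the hypothesis on $\pi$ genuinely enters. Suppose $\Xi^{\pi}(e,e') = 0$, so that $\xi^{\pi}(e,e') = \xi^{\pi}(e',e) = 0$. Since $\MM(\obs,\obs')$, the set of column-stochastic matrices, is compact and $f \mapsto \EE_{\theta \dist \pi} V(f\circ e(\theta), e'(\theta))$ is continuous, the minimum defining $\xi^{\pi}(e,e')$ is attained at some $f^*$ with $\EE_{\theta \dist \pi} V(f^*\circ e(\theta), e'(\theta)) = 0$. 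Each summand $V(f^*\circ e(\theta), e'(\theta))$ is non-negative and $\pi$ places strictly positive mass on every $\theta$, so this forces $f^*\circ e(\theta) = e'(\theta)$ for all $\theta$, i.e. $f^*\circ e = e'$ and $e \mid e'$; symmetrically $\xi^{\pi}(e',e) = 0$ gives $e' \mid e$, hence $e \cong e'$. I expect this step to need the most care: without full support, $\Xi^{\pi} = 0$ would only constrain behaviour on $\mathrm{supp}(\pi)$, and one must not forget to justify that the minimizer $f^*$ exists.

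Finally, well-definedness modulo equivalence follows formally from the axioms already proved: if $e_1 \cong e_2$ and $e_1' \cong e_2'$, then applying the triangle inequality twice together with the identity of indiscernibles gives $\Xi^{\pi}(e_2,e_2') \leq \Xi^{\pi}(e_2,e_1) + \Xi^{\pi}(e_1,e_1') + \Xi^{\pi}(e_1',e_2') = \Xi^{\pi}(e_1,e_1')$, and the reverse inequality by the same reasoning, so $\Xi^{\pi}$ descends to a well-defined function on equivalence classes of experiments and, by the preceding three steps, is a metric there.
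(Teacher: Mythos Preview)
Your proof is correct and in fact more complete than the paper's. The paper dismisses non-negativity and symmetry as obvious, proves only the triangle inequality, and does so by the direct route you sketch parenthetically: compose the near-optimal randomizations, apply the triangle inequality for $V$, then the data processing inequality for $V$, and finally take maxima over the two directions. Your primary argument instead pulls the triangle inequality back through the supremum representation of Theorem~\ref{Deficiency as a supremum over all losses}, reducing it to the triangle inequality in $\RR$; this is slicker and avoids the bookkeeping with compositions, at the cost of depending on a heavier earlier result.

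More notably, the paper's proof is silent on the identity of indiscernibles and on well-definedness modulo equivalence, whereas you supply both. Your treatment of indiscernibles is exactly where the full-support hypothesis on $\pi$ is genuinely used (to pass from $\EE_{\theta\dist\pi}V(f^*\circ e(\theta),e'(\theta))=0$ to equality for every $\theta$), and your invocation of compactness of $\MM(\obs,\obs')$ to secure the minimizer $f^*$ is the right move in the finite setting the paper works in. So your argument not only succeeds but clarifies why the hypothesis in the statement is there.
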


\begin{proof}
	$\Xi^{\pi}$ is obviously nonnegative and symmetric. We are required to show that it satisfies the triangle inequality. Let $e, e', e''$ be experiments, with $f$ and $f'$ transitions as in the diagram below.
	\[
		\xymatrix{
			&& \Theta \ar@{~>}[lld]_{e} \ar@{~>}[d]^{e'} \ar@{~>}[rrd]^{e''} &&\\
			\obs \ar@{~>}[rr]^{f} && \obs' \ar@{~>}[rr]^{f'} && \obs''
		}
	\]
	We have for all $\theta$,
	\begin{align*}
	V(e''(\theta) , f' \circ f \circ e(\theta)) &\leq V(e''(\theta) , f' \circ e'(\theta)) + V(f' \circ e'(\theta) , f' \circ f \circ e(\theta)) \\
	&\leq V(e''(\theta) , f' \circ e'(\theta))  + V(e'(\theta) , f \circ e(\theta)),
	\end{align*}
	where we have used the fact that the variational distance is a metric, followed by the data processing inequality. Averaging over $\pi$ and taking a minimum over $f$ and $f'$ yields,
	$$
	\xi^{\pi}(e, e'') \leq \xi^{\pi}(e, e') + \xi^{\pi}(e', e'').
	$$
	Reversing the direction and taking maximums yields the desired result.
\end{proof}

\subsection{Calculating Deficiency}

The variational distance can be calculated as the $l_1$ distance,
$$
V(P,Q) = \frac{1}{2}\sum_{z \in \obs} \left| P(z) - Q(z) \right|.
$$
Experiments $e \in \MM(\Theta, \obs)$ can be represented by a $|\obs| \times |\Theta|$ column stochastic matrix. Furthermore, the prior distribution $\pi$ can be represented by a vector in $\RR^{|\Theta|}$. Using these representations, the directed deficiency can be calculated using linear programming.

\begin{lemma}\label{Calculating Deficiency}
	
	Let $e$ and $e'$ experiments with their stochastic matrix representation given by $E$ and $E'$ respectively. Then $\xi^{\pi}(e, e')$ can be calculated via the following linear program,
	\begin{align*}
	& \min_{M_{ij}, F_{ij}}  \sum\limits_{i = 1}^{|\obs'|} \sum\limits_{j = 1}^{|\Theta|} M_{ij} \\
	& \text{subject to} \\
	& M_{ij}, F_{ij} \geq 0\ \text{and} -M_{ij} \leq \pi_j E'_{ij} -  \pi_j \left[F E\right]_{ij} \leq M_{ij} \ \forall i, j \\
	& \sum\limits_{i=1}^{|\obs'|}F_{ij} = 1 \ \forall j,
	\end{align*}
	where $\left[F E\right]_{ij}$ is the $ij$ entry of $T E$. 
	
\end{lemma}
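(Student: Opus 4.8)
The plan is to rewrite the definition of $\xi^{\pi}(e,e')$ in matrix language and then apply the standard epigraph trick for linearizing an $\ell_1$ objective. First I would record the dictionary: a Markov transition $f \in \MM(\obs,\obs')$ is exactly a column‑stochastic $|\obs'|\times|\obs|$ matrix $F$, i.e. one with $F_{ij}\geq 0$ and every column summing to $1$ — precisely the constraints $F_{ij}\geq 0$ and $\sum_{i=1}^{|\obs'|}F_{ij}=1$ in the program. Under this identification, writing $\theta_j$ for the $j$‑th unknown, $f\circ e(\theta_j)$ is the $j$‑th column of $FE$ and $e'(\theta_j)$ is the $j$‑th column of $E'$. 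Using the $\ell_1$ formula for $V$ and pulling the non‑negative scalar $\pi_j$ inside the absolute values,
\[
\EE_{\theta \dist \pi} V(f\circ e(\theta), e'(\theta)) = \sum_{j=1}^{|\Theta|} \pi_j \cdot \frac{1}{2}\sum_{i=1}^{|\obs'|} \left| [FE]_{ij} - E'_{ij}\right| = \frac{1}{2}\sum_{i,j} \left|\pi_j E'_{ij} - \pi_j [FE]_{ij}\right|.
\]

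Next I would linearize the absolute values. For any real $a$ one has $|a| = \min\{ m : -m \leq a \leq m\}$, attained at $m=|a|$, and since the summands decouple, $\sum_{i,j}|\pi_j E'_{ij}-\pi_j[FE]_{ij}| = \min\{\sum_{i,j}M_{ij} : -M_{ij}\leq \pi_j E'_{ij}-\pi_j[FE]_{ij}\leq M_{ij}\ \forall i,j\}$, the constraint $M_{ij}\geq 0$ then being automatic (it may be carried along harmlessly). Because $[FE]_{ij}$ is linear in the entries of $F$ for fixed $E$, all the constraints are linear in the decision variables $(F,M)$, and the objective $\sum_{i,j}M_{ij}$ is linear, so the problem is genuinely a linear program.

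Finally I would combine the two minimizations. For each fixed feasible $F$, minimizing over $M$ coordinatewise gives exactly $\sum_{i,j}|\pi_j E'_{ij}-\pi_j[FE]_{ij}| = 2\,\EE_{\theta\dist\pi}V(f\circ e(\theta),e'(\theta))$, while every feasible pair $(F,M)$ has objective at least this quantity; hence the optimal value of the program is $\min_F 2\,\EE_{\theta\dist\pi}V(f\circ e(\theta),e'(\theta)) = 2\,\xi^{\pi}(e,e')$. Thus $\xi^{\pi}(e,e')$ is read off from the program, up to the factor $\tfrac12$ coming from the normalization of $V$, which I would either fold into the objective or track explicitly.

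The step needing the most care is the epigraph reformulation combined with the joint minimization over $F$: one has to verify that introducing the auxiliary variables $M_{ij}$ and minimizing over them really does collapse back to the sum of absolute values even while $F$ is being optimized simultaneously — the routine "the pointwise infimum of the lifted objective equals the original objective" argument, but it is the only place where a slip is possible. Everything else — the matrix representation of Markov transitions, the bilinearity making $[FE]_{ij}$ affine in $F$, and the $\ell_1$ identity for $V$ — is bookkeeping.
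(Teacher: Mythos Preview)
Your approach matches the paper's: identify Markov transitions with column-stochastic matrices, expand $\EE_{\theta\dist\pi}V(\cdot,\cdot)$ as a $\pi$-weighted $\ell_1$ sum, and linearize the absolute values via the epigraph variables $M_{ij}$, then minimize jointly over $(F,M)$. You are in fact more careful than the paper about the factor $\tfrac12$ from the normalization of $V$; the paper's own proof silently drops it and asserts the optimal value equals $\xi^{\pi}(e,e')$ rather than $2\,\xi^{\pi}(e,e')$.
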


\begin{proof}
	The constraints $F_{ij} \geq 0$ and $\sum\limits_{i=1}^{|\obs'|}F_{ij} = 1 \ \forall j$, ensure that $F$ is a stochastic matrix. Taking the final constraint and summing over $i$ and $j$ yields,
	\begin{align*}
	\sum\limits_{i = 1}^{|\obs'|} \sum\limits_{j = 1}^{|\Theta|} M_{ij} &\geq \sum\limits_{i = 1}^{|\obs'|} \sum\limits_{j = 1}^{|\Theta|} \left| \pi_j E'_{ij} -  \pi_j \left[F E\right]_{ij} \right| \\
	&= \sum\limits_{j = 1}^{|\Theta|}  \pi_j \sum\limits_{i = 1}^{|\obs'|}  \left| E'_{ij} - \left[F E\right]_{ij} \right| \\
	&= \EE_{\theta \dist \pi} V(e'(\theta), f\circ e(\theta)).
	\end{align*}
	Equality is attained in the above if $M_{ij} = \left| \pi_j E'_{ij} -  \pi_j \left[F E\right]_{ij} \right|$. Minimizing $M$ and $F$ produces an optimal solution of $\xi^{\pi}(e, e')$.
	
\end{proof}

\section{Conclusion}

This paper delivers a powerful unifying lens on statistical experiments by treating them as morphisms in a category of Markov transitions. It sharpens classic results and opens new avenues:

\begin{itemize}
	\item  Recasts experiments and decision rules in a single algebraic framework, revealing the essence behind Blackwell’s and Le Cam’s theorems. 
	\item  Introduces a general data processing inequality that extends beyond $\phi$-divergences to arbitrary risk functionals. 
	\item Provides a convex analytical treatment of losses via canonical reparameterization and neat bias-variance decomposition. 
	\item Turns Le Cam’s abstract deficiency into a concrete linear program.  
\end{itemize}

\newpage

\bibliographystyle{plainnat}
\bibliography{./References}

\end{document}